\newtheorem{Theorem}{Theorem}
\begin{document}
\settopmatter{printacmref=false}
\title{The Docking Game: Loop Self-Play for Fast, Dynamic, and Accurate Prediction of Flexible Protein--Ligand Binding}



\author{Youzhi Zhang}
\authornotemark[1]
\email{youzhi.zhang@cair-cas.org.hk}
\affiliation{%
  \institution{Centre for Artificial Intelligence and Robotics (CAIR), Hong Kong Institute of Science and Innovation (HKISI), Chinese Academy of Sciences}
  \country{Hong Kong SAR, China}
}

\author{Yufei Li}
\email{yufei_li19972@outlook.com}
\affiliation{%
  \institution{Centre for Artificial Intelligence and Robotics (CAIR), Hong Kong Institute of Science and Innovation (HKISI), Chinese Academy of Sciences}
  \country{Hong Kong SAR, China}
}

\author{Gaofeng Meng}
\authornote{Corresponding author.}
\email{gaofeng.meng@cair-cas.org.hk}
\affiliation{%
  \institution{Centre for Artificial Intelligence and Robotics (CAIR), Hong Kong Institute of Science and Innovation (HKISI), Chinese Academy of Sciences}
  \country{Hong Kong SAR, China}
}

\author{Hongbin Liu}
\email{hongbin.liu@cair-cas.org.hk}
\affiliation{%
  \institution{Centre for Artificial Intelligence and Robotics (CAIR), Hong Kong Institute of Science and Innovation (HKISI), Chinese Academy of Sciences}
  \country{Hong Kong SAR, China}
}

\author{Jiebo Luo}
\email{jluo@hkisi.org.hk}
\affiliation{%
  \institution{Hong Kong Institute of Science and Innovation (HKISI), Chinese Academy of Sciences}
 \country{Hong Kong SAR, China}
}



\begin{abstract}
Molecular docking is a crucial aspect of drug discovery, as it predicts the binding interactions between small-molecule ligands and protein pockets. However, current multi-task learning models for docking often show inferior performance in ligand docking compared to protein pocket docking. This disparity arises largely due to the distinct structural complexities of ligands and proteins. 
To address this issue, we propose a novel game-theoretic framework that models the protein-ligand interaction as a two-player game called \textit{the Docking Game}, with the ligand docking module acting as the ligand player and the protein pocket docking module as the protein player. To solve this game, we develop a novel Loop Self-Play (\textbf{LoopPlay}) algorithm, which alternately trains these players through a two-level loop.
In the outer loop, the players exchange predicted poses, allowing each to incorporate the other's structural predictions, which fosters mutual adaptation over multiple iterations. In the inner loop, each player dynamically refines its predictions by incorporating its own predicted ligand or pocket poses back into its model.
We theoretically show the convergence of LoopPlay, ensuring stable optimization. Extensive experiments conducted on public benchmark datasets demonstrate that LoopPlay achieves approximately a 10\% improvement in predicting accurate binding modes compared to previous state-of-the-art methods. This highlights its potential to enhance the accuracy of molecular docking in drug discovery.
\end{abstract}
 




\maketitle

\section{Introduction}

Small molecules—organic compounds with low molecular weight—are the primary therapeutic agents in the pharmaceutical industry \cite{Zhang2023}. These compounds exert their effects by binding to target proteins and modulating molecular pathways associated with diseases. Understanding the structural details of protein-ligand interactions is crucial for determining drug potency, mechanisms of action, and potential side effects.
Despite significant efforts to elucidate the structures of protein-ligand complexes, the Protein Data Bank (PDB) \cite{berman2000protein} contains only a small fraction of the possible configurations. This scarcity is particularly striking when considering the immense combinatorial space of potential interactions, which includes over $10^{60}$ drug-like molecules \cite{hert2009quantifying, luttens2025rapid} and at least 20,000 human proteins \cite{uniprot2019uniprot}. This gap highlights the urgent need for reliable protein-ligand docking methods \cite{Zhang2023}.

Molecular docking is a fundamental aspect of drug discovery, focused on predicting the binding structures of protein-ligand complexes \cite{Morris1996,agarwal2016overview}. This helps us understand how small drug-like molecules, known as ligands, interact with target proteins \cite{Morris1996, agarwal2016overview}. Traditional methods, based on principles from physics and chemistry, often come with high computational costs \cite{Friesner2004, Trott2010, McNutt2021}. However, the emergence of deep learning-based docking methods \cite{Corso2023, Zhang2023, Pei2023, gao2025fabind+, zhang2025fast} has brought about a significant shift, allowing for faster and more accurate predictions of docked protein-ligand poses.
Many of these new methods \cite{Lu2022, Zhang2023, Stark2022, Pei2023, gao2025fabind+, Corso2023, zhou2023unimol} operate under the rigid docking paradigm, which assumes that proteins remain static during the docking process. This assumption does not align with the dynamic and flexible nature of proteins in physiological conditions \cite{HenzlerWildman2007, Lane2023}, thus limiting the practical applicability of these approaches. Consequently, it is crucial to develop flexible docking methods that better account for realistic protein dynamics \cite{Sahu2024}.

Recent flexible docking methods \cite{Lu2024,Huang2024,zhang2024packdock,Qiao2024} primarily utilize diffusion models and sampling strategies \cite{Yang2023} because of their strong ability to model complex distributions. However, the iterative nature of the diffusion process and the extensive sampling can decrease computational efficiency \cite{zhang2025fast}. On the other hand, regression-based methods \cite{Lu2022,Zhang2023,Stark2022,Pei2023} provide faster predictions by directly estimating the bound structures of protein-ligand complexes using specialized neural networks. The regression-based FABind series \cite{Pei2023,gao2025fabind+} aims to strike a balance between docking accuracy and computational efficiency through an end-to-end multitask model. This model integrates pocket prediction and docking prediction, thereby eliminating the need for a separate external module for pocket selection. FABFlex \cite{zhang2025fast} extends the FABind series to flexible docking scenarios by incorporating a pocket docking prediction module.

However, the regression-based multi-task framework for protein-ligand docking, as demonstrated by FABFlex \cite{zhang2025fast}, shows a significant weakness in its ligand docking performance compared to protein pocket docking. This discrepancy arises primarily due to the differing properties of ligands and protein pockets. Although FABFlex employs separate modules for ligand docking (predicting ligand holo structures from apo states) and protein pocket docking (forecasting holo conformations of protein pockets), the joint training through a multi-task learning model leads to suboptimal outcomes for ligand docking. The average Root Mean Square Deviation (RMSD) for ligand docking is 5.44, while that for pocket docking is only 1.1.
This performance gap is likely a result of the substantial differences in structural complexity between the two tasks. For instance, protein pockets typically have many more nodes than ligands, with pockets like the one in PDB 3ZZF containing 144 nodes compared to just 13 for ligands in the same structure. These differences can lead the model to prioritize the more complex pocket docking task, resulting in inadequate optimization for the ligand docking module.

\begin{figure}
    \centering
    \includegraphics[width=0.9\linewidth]{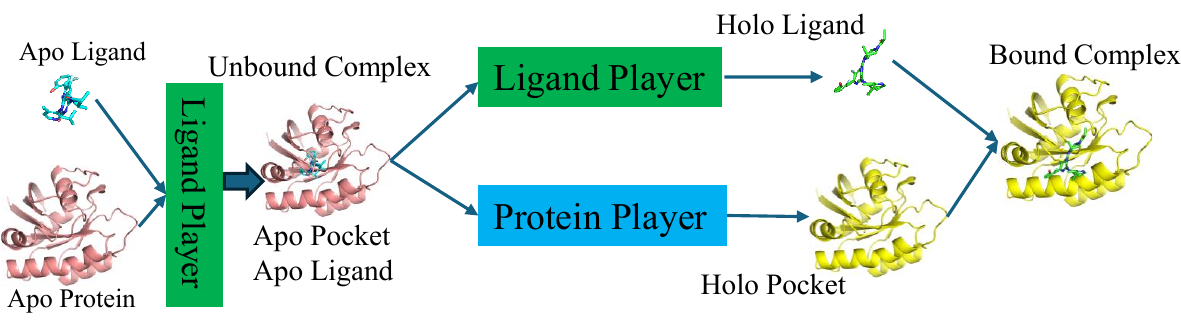}
    \caption{\textit{The Docking Game}: given apo ligand and protein structures, the ligand player first identifies the protein pocket location and repositions the ligand from the protein's center to the pocket's center, forming an unbound protein pocket--ligand complex. Subsequently, the ligand player and protein player independently predict the holo ligand pose and holo protein pocket pose, respectively, to construct the bound protein pocket--ligand complex.}
    \label{fig:ligand_protein_game}
\end{figure}
 
 To enhance protein-ligand docking performance, we propose a novel game-theoretic framework that models the interaction between end-to-end ligand docking pose prediction and protein pocket docking pose prediction as a two-player game called   \textit{the Docking Game}, with the ligand docking module as the \textit{ligand player} and the protein pocket docking module as the \textit{protein player}. Drawing inspiration from the FABind series \cite{Pei2023,gao2025fabind+,zhang2025fast}, the ligand player incorporates an integrated pocket prediction module to eliminate the need for an external pocket selection step. In our game-theoretic approach to blind flexible protein-ligand docking, given apo ligand and protein structures, the ligand player first identifies the protein pocket location and repositions the ligand from the protein's center to the pocket's center, forming an unbound protein pocket--ligand complex. Subsequently, the ligand player and protein player independently predict the holo ligand pose and holo protein pocket pose, respectively, to construct the bound protein pocket--ligand complex, as illustrated in Figure \ref{fig:ligand_protein_game}. 
 
  To solve this game, we develop a novel Loop Self-Play (\textbf{LoopPlay}) algorithm to alternately train these two models. In LoopPlay, each player model undergoes training through a two-level loop.
  In the outer loop, the ligand and protein players exchange predicted poses to incorporate the opponent's structural insights, fostering mutual adaptation over multiple iterations. In the inner loop, each player dynamically refines its predictions by iteratively feeding its own predicted ligand or pocket poses back into its model, enhancing docking pose accuracy.
 
  Our key contributions are the following:
  \begin{itemize}
      \item We propose a novel two-player game model, called \textit{the Docking Game}, to represent and enforce the interaction between ligand docking and protein pocket docking modules. 
      \item We develop a novel Loop Self-Play (LoopPlay) algorithm with theoretical convergence, to ensure stable
optimization. 
      \item We conduct extensive experiments on public benchmark datasets and show that LoopPlay significantly improves the prediction of accurate binding modes, achieving approximately a 10\% improvement compared to existing state-of-the-art methods. 
  \end{itemize}

\section{Related Work}

\textbf{Flexible Molecular Docking.} DiffPack \cite{zhang2023diffpack} is a torsional diffusion model that captures the joint distribution of side-chain torsional angles, the sole degrees of freedom in side-chain packing with ligands, by performing diffusion and denoising in the torsional space. To mitigate challenges from simultaneously perturbing all four torsional angles, it autoregressively generates these angles and trains distinct diffusion models for each one.   DIFFDOCK-POCKET \cite{plainer2023diffdock} is a diffusion-based docking method that predicts ligand poses within a designated binding pocket by conditioning on the target protein. Additionally, it accounts for receptor flexibility and determines the positions of sidechains near the binding site.
  Re-Dock \cite{Huang2024}   presents a diffusion bridge generative model for flexible molecular docking, concurrently predicting ligand and pocket sidechain poses by applying diffusion processes to geometric manifolds. It utilizes an energy-to-geometry mapping, inspired by the Newton-Euler equation, to explicitly capture protein-ligand interactions, producing realistic and physically plausible conformations. DynamicBind \cite{Lu2024} is a deep learning approach that uses equivariant geometric diffusion networks to create a smooth energy landscape, facilitating efficient shifts between various states. It predicts ligand-specific conformations from unbound protein structures, eliminating the need for holo-structures.   NeuralPLexer \cite{Qiao2024} is a computational method that predicts protein-ligand complex structures directly from protein sequences and ligand molecular graphs. It employs a deep generative model to generate three-dimensional binding complex structures and their conformational changes at atomic resolution. The model uses a diffusion process with key biophysical constraints and a multi-scale geometric deep learning system to iteratively sample all heavy-atom coordinates and residue-level contact maps in a hierarchical fashion. PackDock \cite{zhang2024packdock} is a flexible docking approach that integrates "induced fit" and "conformation selection"  mechanisms within a two-stage docking process. Its central component, PackPocket, employs a diffusion model to navigate the side-chain conformation space in ligand binding pockets, effective in both ligand-present and ligand-absent scenarios. Although these methods improve docking performance, they suffer from the inherent inefficiencies of multi-round sampling and diffusion models, resulting in low computational efficiency. This drawback limits their scalability for evaluating large-scale, unknown protein-ligand interactions critical for drug discovery. FABFlex \cite{zhang2025fast} is a regression-based multi-task learning model optimized for rapid and precise blind flexible docking in realistic scenarios where protein flexibility and unknown binding pocket locations pose significant challenges. Its architecture integrates three synergistic modules: (1) a pocket prediction module that identifies potential binding sites to tackle the complexities of blind docking, (2) a ligand docking module that predicts ligand holo structures from their apo states, and (3) a pocket docking module that forecasts the holo conformations of protein pockets starting from their apo forms. 
 Instead of training modules together via a multi-task learning model, we adopt a game-theoretic approach to alternately train the ligand docking module and protein pocket docking module, which significantly improves the ligand docking performance in blind flexible docking problems.   
 
\textbf{Self-Play.}  
In the realm of game-theoretic approaches \cite{shoham2008multiagent}, self-play has been extensively explored as a powerful paradigm for optimizing strategies in competitive settings \cite{silver2016mastering}, with applications ranging from game theory to machine learning \cite{goodfellow2014generative}. Self-play in game theory involves agents iteratively refining their strategies by playing against themselves, converging toward equilibrium solutions such as Nash equilibria, as seen in seminal works on multi-agent systems \cite{shoham2008multiagent}. Notable self-play algorithm frameworks include Policy-Space Response Oracles (PSRO) \cite{lanctot2017unified} and Fictitious Play \cite{brown:fp1951}, which have advanced the field by modeling strategic interactions. PSRO iteratively constructs a meta-game where agents learn the best responses to a mixture of opponent policies, demonstrating successes in complex games such as poker. Fictitious Play, on the other hand, assumes agents update their strategies based on the empirical distribution of past opponent actions, converging to equilibrium in certain game classes. Additionally, Generative Adversarial Networks (GANs) \cite{goodfellow2014generative} embody a self-play-like framework in machine learning, where a generator and discriminator compete in a minimax game, optimizing through adversarial training to produce realistic data distributions. These approaches share similarities with our LoopPlay algorithm, which leverages a game-theoretic self-play framework to model ligand and protein pocket docking as interacting players, but distinguishes itself through its novel two-level loop structure for iterative refinement and cross-module pose exchange, achieving superior performance in protein-ligand docking tasks.

\section{Preliminaries}

\subsection{Blind Flexible Docking} Blind flexible molecular docking aims to predict the bound structure of a protein pocket--ligand complex using unbound (apo) conformations: a randomly initialized apo ligand generated by RDKit \cite{Landrum2013} and an AlphaFold2-predicted apo protein \cite{Jumper2021}. The objective is to produce the coordinates of the docked ligand (holo ligand) and the docked binding protein pocket (holo pocket), denoted as $\mathbf{\hat{x}} = \{\{\mathbf{\hat{x}}_i\}_{i=1}^{n_l},  \{\mathbf{\hat{x}}_j\}_{j=1}^{n_{p^*}}\}$, referred to as the bound complex. Here,  $n_l = |V_l|$ and $n_{p^*} = |V_{p^*}|$ represent the number of ligand atoms and pocket residues, respectively. To do that, we model the protein--ligand complex as a graph.

\textbf{Protein--Ligand Complex:}
Each protein--ligand complex is represented as a heterogeneous graph:
\[
G = \{V := (V_L, V_P), E := (E_L, E_P, E_{LP})\},
\]
where $V$ and $E$ are the sets of nodes and edges, respectively. This graph includes a ligand subgraph and a protein subgraph:

\begin{itemize}
  \item Ligand subgraph: $G_L = \{V_L, E_L\}$, where each edge in $ E_L$ represents a chemical bond of the ligand, and  each node $v_i = (\mathbf{h}_i, \mathbf{x}_i) \in V_L$ represents an atom in the ligand with:
    \begin{itemize}
      \item $\mathbf{h}_i \in \mathbb{R}^{d_l}$ is the feature vector extracted using TorchDrug \cite{zhu2022torchdrug},
      \item $\mathbf{x}_i \in \mathbb{R}^3$ is the 3D spatial coordinate of   the atom.
    \end{itemize}
  \item Protein subgraph: $G_P = \{V_P, E_P\}$, where each edge in $E_P$ connects two residues within an 8\AA{} radius,  each node $v_j = (\mathbf{h}_j, \mathbf{x}_j) \in V_P$ represents a residue in  the protein with:
    \begin{itemize}
      \item $\mathbf{h}_j \in \mathbb{R}^{d_p}$ is the feature derived from ESM-2 \cite{Lin2022},
      \item $\mathbf{x}_j \in \mathbb{R}^3$ is the 3D coordinate of the C$\alpha$ atom of the residue.
    \end{itemize}
  \item External interface edges $E_{LP}$ connect nodes $v_i \in V_L$ and $v_j \in V_P$ if they are spatially within $ 10$\AA{}.
\end{itemize}

For the true protein pocket region, we define a reduced heterogeneous graph to represent the protein pocket--ligand complex:
\[
G^* = \{V^* := (V_L, V_P^*), E^* := (E_L, E_P^*, E_{LP}^*)\},
\]
where $V_P^*$, $E_P^*$, and $E_{LP}^*$ represent the subset of the protein comprising the pocket. Similarly, the predicted protein pocket region is denoted using hat notation:
\[
\hat{G}^* = \{\hat{V}^* := (V_L, \hat{V}_P^*), \hat{E}^* := (E_L, \hat{E}_P^*, \hat{E}_{LP}^*)\}.
\]

 \subsection{The FABind Layer}
The foundational unit shared by the regression-based method FABind \cite{Pei2023}  and its advanced versions FABind+ \cite{gao2025fabind+} and FABFlex \cite{zhang2025fast} is the FABind layer—an improved E(3)-equivariant graph neural network (EGNN) \cite{Satorras2021} designed for protein--ligand heterogeneous graphs. Multiple FABind layers can be stacked to deepen the model and capture higher-order relational features, improving prediction accuracy across ligand–protein docking tasks.
At the $l$-th FABind   layer, node features, spatial coordinates, and pairwise embeddings are updated as:
\[
\mathbf{h}^{(l+1)}_i,\ \mathbf{h}^{(l+1)}_j,\ \mathbf{x}^{(l+1)}_i,\ \mathbf{x}^{(l+1)}_j,\ \textbf{p}^{(l+1)}_{ij} = f(\mathbf{h}^{(l)}_i,\ \mathbf{h}^{(l)}_j,\ \mathbf{x}^{(l)}_i,\ \mathbf{x}^{(l)}_j,\ \textbf{p}^{(l)}_{ij}),
\]
where $\textbf{p}_{ij} \in \mathbb{R}^d$ represents the interaction embedding between a ligand atom $v_i \in V_l$ and a protein residue $v_j \in V_p$, with hidden dimension of $d$.

\section{A Game-Theoretic Approach to Docking}
We propose a game-theoretic model for blind flexible docking, which is formulated as a two-player game called    \textit{the Docking Game} between a \textit{Ligand Player} and a \textit{Protein Player}.  Each player addresses distinct subtasks to predict the bound protein pocket--ligand complex, as shown in Figure  \ref{fig:ligand_protein_game}. The ligand player handles two subtasks: identifying pocket sites, modeled as a binary classification problem to determine which protein residues form the docking pocket, and predicting the bound holo structure of the ligand, treated as a 3D coordinate regression problem. The protein player focuses on predicting the bound holo structure of the protein pocket, also formulated as a 3D coordinate regression problem. Formally, we have:  
\begin{itemize}
    \item \textbf{Players: Ligand Player (L)   and Protein Player (P)}\\
    In the game-theoretic model for blind flexible docking, the problem is conceptualized as a cooperative game involving two players: the ligand player (L) and the protein player (P). Each player represents a computational agent responsible for specific subtasks in the docking process. The ligand player handles tasks related to the ligand, including identifying pocket sites and then predicting the ligand's bound holo structure based on it. The protein player focuses on predicting the bound holo structure of the protein pocket.  
    \item \textbf{Strategies: $\theta_L \in \Theta_L$, $\theta_P \in \Theta_P$, where $\Theta_L$, $\Theta_P$ are parameter spaces} \\
    The ligand player is parameterized by $\theta_L \in 
    \Theta_L$, and the protein player is  parameterized by $\theta_P\in \Theta_P$.
    The strategies in this game are the choices of parameters $\theta_L$ and $\theta_P$ for the ligand and protein players, respectively. These parameters belong to their respective parameter spaces, $\Theta_L$ and $\Theta_P$, which are sets of all possible parameter configurations for the models used by each player. For example, $\Theta_L$ might represent the space of weights for a neural network predicting pocket sites and ligand coordinates, while $\Theta_P$ represents the weights for a network predicting pocket coordinates.  

    \item \textbf{Payoffs: Define the losses as cost functions to minimize} \\
    The payoffs are defined as loss functions $\mathcal{J}_L(\theta_L, \theta_P)$ for the ligand player and $\mathcal{J}_P(\theta_L, \theta_P)$ for the protein player, which \textit{both} players aim to minimize together. These functions quantify the performance of the players' strategies in achieving accurate docking predictions, balancing individual subtask losses with a shared interaction term. The   components are (which  will be defined formally later): 
    \begin{itemize}
        \item \textbf{For the Ligand Player:} $\mathcal{J}_L(\theta_L, \theta_P) = \alpha_1 \mathcal{L}_{\text{pocket\_pred}}(\theta_L) + \alpha_2 \mathcal{L}_{\text{ligand\_coord}}(\theta_L) + \gamma \mathcal{L}_{\text{dis\_map}}(\theta_L, \theta_P)$ \\
        The loss function for the ligand player consists of three terms:
        \begin{itemize}
            \item $\mathcal{L}_{\text{pocket\_pred}}(\theta_L)$: This term represents the loss for the binary classification subtask of identifying pocket sites, determining which protein residues form the binding pocket. It is a function of $\theta_L$, as the ligand player is responsible for this prediction. 
            \item $\mathcal{L}_{\text{ligand\_coord}}(\theta_L)$: This term captures the loss for the 3D coordinate regression subtask of predicting the bound holo structure of the ligand. It measures the discrepancy between the predicted and true 3D coordinates of the ligand atoms, depending only on $\theta_L$.
            \item $\mathcal{L}_{\text{dis\_map}}(\theta_L, \theta_P)$: This shared term represents the distance map loss, which ensures the compatibility between the ligand and protein predictions. It quantifies the agreement between the predicted ligand and pocket structures by comparing pairwise distances, and depends on both $\theta_L$ and $\theta_P$.
            \item $\alpha_1, \alpha_2, \gamma$: These are weighting coefficients that balance the contributions of each loss term. $\alpha_1$ and $\alpha_2$ prioritize the pocket prediction and ligand coordinate tasks, respectively, while $\gamma$ controls the influence of the shared distance map loss, ensuring the ligand and protein predictions align.
        \end{itemize}
        \item \textbf{For the Protein Player:} $\mathcal{J}_P(\theta_L, \theta_P) = \beta \mathcal{L}_{\text{pocket\_coord}}(\theta_P) + \gamma \mathcal{L}_{\text{dis\_map}}(\theta_L, \theta_P)$ \\
        The loss function for the protein player includes two terms:
        \begin{itemize}
            \item $\mathcal{L}_{\text{pocket\_coord}}(\theta_P)$: This term represents the loss for the 3D coordinate regression subtask of predicting the bound holo structure of the pocket. It measures the error between the predicted and true 3D coordinates of the pocket residues, depending only on $\theta_P$.
            \item $\mathcal{L}_{\text{dis\_map}}(\theta_L, \theta_P)$: The same term in the loss function of the ligand player. 
            \item $\beta, \gamma$: These coefficients weight the pocket coordinate loss and the shared distance map loss, respectively. $\beta$ prioritizes the accuracy of the protein pocket structure prediction, while $\gamma$ (shared with the ligand player's loss) ensures alignment between the ligand and protein predictions.
        \end{itemize}
    \end{itemize}

    \item \textbf{Nash Equilibrium:} The solution concept for \textit{the Docking Game} \\ 
    The Nash equilibrium represents an optimal solution where neither player can improve their payoff (i.e., reduce their loss) by unilaterally changing their strategy, given the other player's strategy. Mathematically, $(\theta_L^*, \theta_P^*)$ is a Nash equilibrium if:
    \[
    \mathcal{J}_L(\theta_L^*, \theta_P^*) \leq \mathcal{J}_L(\theta_L, \theta_P^*) \quad \forall \theta_L \in \Theta_L,
    \]
    \[
    \mathcal{J}_P(\theta_L^*, \theta_P^*) \leq \mathcal{J}_P(\theta_L^*, \theta_P) \quad \forall \theta_P \in \Theta_P.
    \]
    This means:
    \begin{align*}
        & \theta_L^* = \arg\min_{\theta_L} \left[ \alpha_1 \mathcal{L}_{\text{pocket\_pred}}(\theta_L) + \alpha_2 \mathcal{L}_{\text{ligand\_coord}}(\theta_L) + \gamma \mathcal{L}_{\text{dis\_map}}(\theta_L, \theta_P^*) \right],\\
        &\theta_P^* = \arg\min_{\theta_P} \left[ \beta \mathcal{L}_{\text{pocket\_coord}}(\theta_P) + \gamma \mathcal{L}_{\text{dis\_map}}(\theta_L^*, \theta_P) \right].
    \end{align*}
    In this context, the Nash equilibrium corresponds to the optimal parameters $\theta_L^*$ and $\theta_P^*$, where the ligand player has minimized its loss for pocket site identification and ligand coordinate prediction, and the protein player has minimized its loss for pocket coordinate prediction, while both ensure their predictions are consistent via the shared distance map loss. The equilibrium ensures that the predicted ligand and pocket structures are mutually optimal and compatible, solving the blind flexible docking problem effectively.
\end{itemize}

\subsection{The Task and Loss Function of the Ligand Player}

To tackle the complexities of blind flexible docking, the   ligand player has two tasks, and each task is represented  by a module of neural networks: 
\begin{itemize}
    \item \textbf{Pocket Prediction Module} $\mathcal{M}_S(\cdot)$: Determines which protein residues constitute the binding pocket.
    \item \textbf{Ligand Docking Module} $\mathcal{M}_L(\cdot)$: Predicts the bound structure of the ligand.
\end{itemize}

Each module is built using stacked FABind layers, a form of improved E(3)-equivariant graph neural networks optimized for heterogeneous graphs representing ligand–protein interactions.
The pocket prediction module first processes the protein--ligand graph $G$ to output an indicator vector $\{\hat{y}_j\}_{j=1}^{n_p}$:
\[
\{\hat{y}_j\}_{j=1}^{n_p} = \mathcal{M}_S(G, \{\mathbf{h}_i, \mathbf{x}_i\}_{i=1}^{n_l}, \{\mathbf{h}_j, \mathbf{x}_j\}_{j=1}^{n_p}),
\quad
\hat{V}_{p^*} = \{\hat{y}_j \odot v_j\}_{j=1}^{n_p}
\]
where $\hat{y}_j \in \{0,1\}$ denotes whether residue $v_j$ is part of the pocket, and $\odot$ represents the selection operation.

Given the rarity of pocket-forming residues within a protein, the task of detecting pocket residues is approached as an imbalanced binary classification problem, utilizing binary cross-entropy loss (BCELoss) to identify which residues constitute the pocket. The loss is defined as:
\[
\mathcal{L}_{\text{pocket\_cls}} = \frac{1}{n} \sum_{i=1}^n \frac{p_i}{q_i} \left\{ -\sum_{j=1}^{p_i} \left[ y_i \log \left( \hat{y}_i \right) + \left( 1 - y_i \right) \log \left( 1 - \hat{y}_i \right) \right] \right\},
\]
where $n$ denotes the number of training complexes, $p_i$ represents the total residues in the $i$-th protein, and $q_i$ indicates the number of pocket-forming residues. The weighting factor $p_i / q_i$ enhances the focus on proteins with fewer pocket residues, ensuring their significance during training. Additionally, the pocket center loss, $\mathcal{L}_{\text{pocket\_center}}$, employs the Huber loss \cite{huber1992robust} to guide the prediction of the pocket's centroid, expressed as:
\[
\mathcal{L}_{\text{pocket\_center}} = \frac{1}{n} \sum_{i=1}^n \text{HuberLoss} \left( \text{center}_i, \widehat{\text{center}}_i \right),
\]
where $\text{center}_i$ is the true centroid of the pocket in the $i$-th protein, and $\widehat{\text{center}}_i$ is the predicted centroid, computed as a weighted average of residue coordinates using probabilities from the Gumbel-Softmax distribution \cite{jang2017categorical}. The overall pocket prediction loss, $\mathcal{L}_{\text{pocket\_pred}}$, integrates the residue classification loss $\mathcal{L}_{\text{pocket\_cls}}$ and the pocket center loss $\mathcal{L}_{\text{pocket\_center}}$:
\[
\mathcal{L}_{\text{pocket\_pred}} = \alpha^{cls}_1 \mathcal{L}_{\text{pocket\_cls}} + \alpha^{center}_1 \mathcal{L}_{\text{pocket\_center}}.
\]

Using the predicted pocket set $\hat{V}_{p^*}$, the ligand  docking module refinse the protein--ligand graph into a protein pocket--ligand subgraph $\hat{G}^*$ and generate the holo structure of the ligand:
\[
\{\mathbf{\mathbf{\hat{x}}}_i\}_{i=1}^{n_l} = \mathcal{M}_L(\hat{G}^*, \{\mathbf{h}_i, \mathbf{x}_i\}_{i=1}^{n_l}, \{\mathbf{h}_j, \mathbf{x}_j\}_{j=1}^{\hat{n}_{p^*}}). 
\]

 For the ligand coordinate predictions, the Huber loss is used to align the predicted holo structures with their true counterparts:
\[
\mathcal{L}_{\text{ligand\_coord}} = \frac{1}{n} \sum_{i=1}^n \text{HuberLoss} \left( \mathbf{x}_i^l, \hat{\mathbf{x}}_i^l \right),  
\]
where $\mathbf{x}_i^l$ is the true holo coordinate for the ligand, and $\hat{\mathbf{x}}_i^l$ is the   predicted coordinate for the $i$-th complex.

The distance map loss, $\mathcal{L}_{\text{distance}}$, acts as a supplementary objective, using mean-squared-error loss (MSELoss) to regulate the relative positions between ligand atoms and pocket residues, defined as:
\[
\mathcal{L}_{\text{dis\_map}} = \frac{1}{n} \sum_{i=1}^n \text{MSELoss} \left( \mathbf{D}_i, \hat{\mathbf{D}}_i \right),
\]
where $\mathbf{D}_i$ and $\hat{\mathbf{D}}_i$ are the true and predicted distance maps, respectively, with $\mathbf{D}_i^{jk}$ representing the Euclidean distance between the $j$-th ligand atom and the $k$-th pocket residue, and $\hat{\mathbf{D}}_i^{jk}$ denoting the predicted pairwise distances.

Finally, we have the loss function for the ligand player: $\mathcal{J}_L(\theta_L, \theta_P) = \alpha_1 \mathcal{L}_{\text{pocket\_pred}}(\theta_L) + \alpha_2 \mathcal{L}_{\text{ligand\_coord}}(\theta_L) + \gamma \mathcal{L}_{\text{dis\_map}}(\theta_L, \theta_P)$. This loss function is inspired by FABind \cite{Pei2023},    FABind+ \cite{gao2025fabind+}, and FABFlex \cite{zhang2025fast}.
 
\subsection{The Task and Loss Function of the Protein Player}

To tackle the complexities of blind flexible docking, the protein player has one task,  which is represented by a module of neural networks called \textbf{Pocket Docking Module} $\mathcal{M}_P(\cdot)$: predicting the bound structure of the protein pocket. This module is built using stacked FABind layers, a form of improved E(3)-equivariant graph neural networks optimized for heterogeneous graphs representing ligand–protein interactions. 

Using the predicted pocket set $\hat{V}_{p^*}$, the   pocket docking module refine the protein--ligand graph into a protein pocket--ligand subgraph $\hat{G}^*$ and generate the holo structure of the protein pocket:
\[
\{\mathbf{\hat{x}}_j\}_{j=1}^{\hat{n}_{p^*}} = \mathcal{M}_P(\hat{G}^*, \{\mathbf{h}_i, \mathbf{x}_i\}_{i=1}^{n_l}, \{\mathbf{h}_j, \mathbf{x}_j\}_{j=1}^{\hat{n}_{p^*}}).
\]

 For the pocket coordinate predictions, the Huber loss is used to align the predicted holo structures with their true counterparts:
\[
\mathcal{L}_{\text{pocket\_coord}} = \frac{1}{N} \sum_{i=1}^N \text{HuberLoss} \left( \mathbf{x}_i^p, \hat{\mathbf{x}}_i^p \right),
\]
where   $\mathbf{x}_i^p$ is the true holo coordinate for the   pocket,  and   $\hat{\mathbf{x}}_i^p$ is the   predicted coordinate for the $i$-th complex.

Finally, we have the loss function for the protein player:   $\mathcal{J}_P(\theta_L, \theta_P) = \beta \mathcal{L}_{\text{pocket\_coord}}(\theta_P) + \gamma \mathcal{L}_{\text{dis\_map}}(\theta_L, \theta_P)$. This loss function is inspired by FABFlex \cite{zhang2025fast}.


\begin{algorithm}[H]
\caption{LoopPlay}\label{alg_loopplay}
\begin{algorithmic}[1]
\STATE \textbf{Input:} Initialized models $\mathcal{M}_S$,  $\mathcal{M}_L$, and  $\mathcal{M}_P$,  training dataset $\mathcal{D}$ with the 3D coordinates of apo protein and ligand with features represented by  $\{\mathbf{h}_i, \mathbf{x}_i\}_{i=1}^{n_l}$ and $\{\mathbf{h}_j, \mathbf{x}_j\}_{j=1}^{n_p}$   

\STATE ActingPlayer $\gets$ Player L \hfill // The player model that is trained
\FOR{each epoch and each complex}
\STATE Place ligand at the center of the protein and construct protein--ligand graph $G$
\STATE  
$\{\hat{y}_j\}_{j=1}^{n_p} \gets \mathcal{M}_S(G, \{\mathbf{h}_i, \mathbf{x}_i\}_{i=1}^{n_l}, \{\mathbf{h}_j, \mathbf{x}_j\}_{j=1}^{n_p})$\hfill // Pocket prediction
\STATE $\hat{V}_{p^*} \gets \{\hat{y}_j \odot v_j\}_{j=1}^{n_p}
$ \hfill // Select pocket residues
    \IF{ActingPlayer = Player L}
        \FOR{$k \gets 1$ to $M_L$}
            \STATE Construct protein pocket--ligand graph $\hat{G}^*$ with $\{\mathbf{\hat{x}}^{k,1}_i\}_{i=1}^{n_l} \gets \{\mathbf{\hat{x}}^{k}_i\}_{i=1}^{n_l}
           $
            \FOR{$r \gets 1$ to $N_L$}
                \STATE $\{\mathbf{\hat{x}}^{k,r+1}_i\}_{i=1}^{n_l} \gets \mathcal{M}_L(\hat{G}^*, \{\mathbf{h}_i, \mathbf{\hat{x}}^{k,r}_i\}_{i=1}^{n_l}, \{\mathbf{h}_j, \mathbf{\hat{x}}^k_j\}_{j=1}^{\hat{n}_{p^*}})$
            \ENDFOR
            \STATE  $\{\mathbf{\hat{x}}^{k+1}_i\}_{i=1}^{n_l} \gets \{\mathbf{\hat{x}}^{k,N_L}_i\}_{i=1}^{n_l}
           $\hfill // Ligand docking
            \STATE  $\{\mathbf{\hat{x}}^{k+1}_j\}_{j=1}^{\hat{n}_{p^*}} \gets $   Protein Model with the input $(\hat{G}^*, \{\mathbf{h}_i, \mathbf{\hat{x}}^k_i\}_{i=1}^{n_l}, \{\mathbf{h}_j, \mathbf{\hat{x}}^k_j\}_{j=1}^{\hat{n}_{p^*}})$  \hfill // Protein Player
        \ENDFOR
        \STATE Update models $\mathcal{M}_S$ and $\mathcal{M}_L$ based on the loss function $\mathcal{J}_L$
    \ELSE
        \FOR{$k \gets 1$ to $M_P$}
            \STATE Construct protein pocket--ligand graph $\hat{G}^*$ with $\{\mathbf{\hat{x}}^{k,1}_j\}_{j=1}^{\hat{n}_{p^*}} \gets \{\mathbf{\hat{x}}^{k}_j\}_{j=1}^{n_p^*}
           $
            \FOR{$r \gets 1$ to $N_P$}
                \STATE $\{\mathbf{\hat{x}}^{k,r+1}_j\}_{j=1}^{\hat{n}_{p^*}} \gets \mathcal{M}_P(\hat{G}^*, \{\mathbf{h}_i, \mathbf{\hat{x}}^k_i\}_{i=1}^{n_l}, \{\mathbf{h}_j, \mathbf{x}^{k,r}_j\}_{j=1}^{\hat{n}_{p^*}})$
            \ENDFOR
            \STATE  $\{\mathbf{\hat{x}}^{k+1}_j\}_{j=1}^{\hat{n}_{p^*}} \gets \{\mathbf{\hat{x}}^{k,N_P}_j\}_{j=1}^{n_p^*}
           $\hfill // Pocket docking
            \STATE  $\{\mathbf{\hat{x}}^{k+1}_i\}_{i=1}^{n_l} \gets $   Ligand Model with the input $(\hat{G}^*, \{\mathbf{h}_i, \mathbf{\hat{x}}^k_i\}_{i=1}^{n_l}, \{\mathbf{h}_j, \mathbf{\hat{x}}^k_j\}_{j=1}^{\hat{n}_{p^*}})$  \hfill // Ligand Player
        \ENDFOR
        \STATE Update model $\mathcal{M}_P$   based on the loss function $\mathcal{J}_P$
    \ENDIF
    \STATE Update ActiongPlayer
\ENDFOR
\end{algorithmic}
\end{algorithm}

\section{LoopPlay: Loop Self-Play}
\begin{figure}
    \centering
    \includegraphics[width=0.8\linewidth]{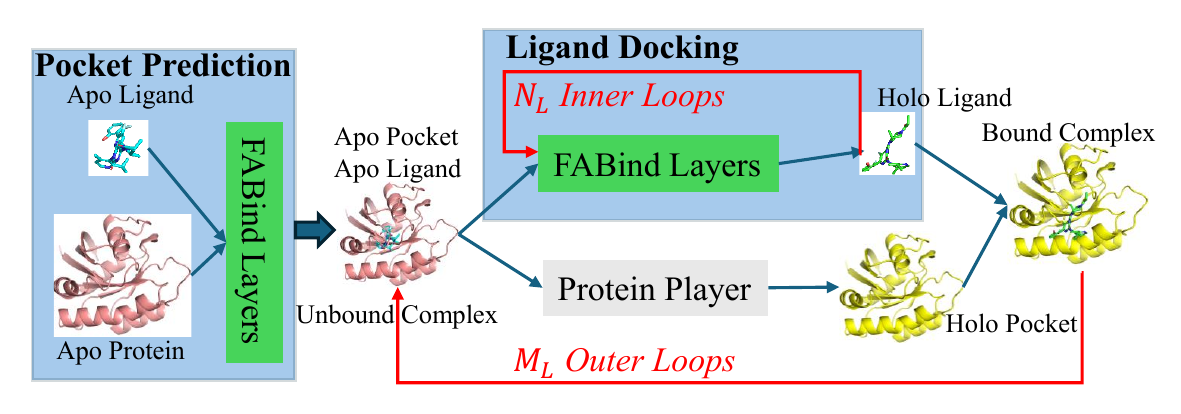}
    \caption{The Ligand Player in LoopPlay.}
    \label{fig:ligand}
\end{figure}
\begin{figure}
    \centering
    \includegraphics[width=0.8\linewidth]{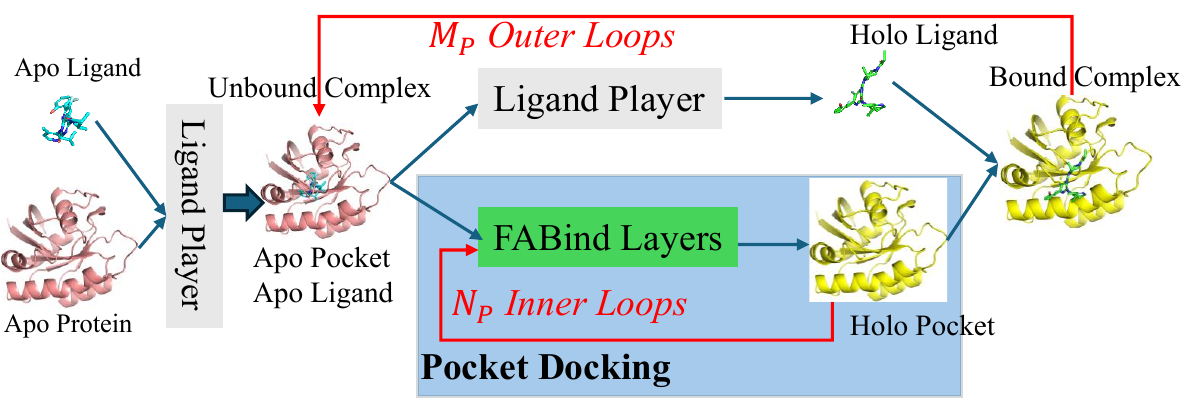}
    \caption{The Protein Player in LoopPlay.}
    \label{fig:protein}
\end{figure}
In this section, we proposed a novel algorithm called \textbf{Loop Self-play (LoopPlay)}, as shown in Algorithm \ref{alg_loopplay}, to solve our \textit{Docking Game}.

Our LoopPlay algorithm is a novel training approach designed to optimize the interaction between two key components: the ligand player and the protein player in our \textit{Docking Game}. 
The algorithm employs an iterative, alternating training strategy to enhance the performance of both players in a cooperative and adversarial manner, inspired by self-play techniques in game theory.

The self-play training process alternates between two distinct phases:
\begin{enumerate}
    \item  \textbf{Ligand Player Training Phase:}
   In this phase, the ligand player is actively trained for one or more epochs, while the protein player is kept fixed (i.e., its parameters are not updated).
  The ligand player focuses on optimizing the pocket prediction and the ligand's pose (position, orientation, and conformation) to achieve better docking accuracy within the protein's binding pocket, as defined by the fixed protein player. This phase allows the ligand player to adapt to the current representation of the protein's binding pocket, effectively learning to propose ligand configurations that maximize docking success. The procedure is shown in Figure \ref{fig:ligand}.
\item \textbf{Protein Player Training Phase:}
   In this phase, the protein player is trained for one or more epochs, while the ligand player is kept fixed.
  The protein player refines its representation of the protein's binding pocket,  effectively learning to propose protein pocket configurations that maximize docking success.
   This procedure is shown in Figure \ref{fig:protein}.
\item \textbf{Iteration:}
    Steps 1 and 2 are repeated iteratively, allowing both players to co-evolve. The ligand player improves its ability to generate optimal ligand poses, while the protein player enhances its ability to generate optimal protein pocket poses. 
\end{enumerate}
 In addition, our LoopPlay framework employs a two-level loop structure for each player to iteratively refine ligand and pocket docking predictions through cooperative interaction between two players: the ligand player and the protein player.
\begin{enumerate}
    \item \textbf{Outer Loop (Cross-Player Learning, i.e., Learning Opponent's Predictions)}
    \begin{itemize}
        \item The two players exchange their latest predictions—the ligand player receives the refined protein pocket pose, and the pocket player receives the refined ligand pose.
        \item Each player then incorporates the opponent’s predictions into their own model, enabling mutual structural adaptation.
        \item Through multiple iterations, this cooperative loop allows both players to progressively improve their predictions by learning from each other’s outputs.
    \end{itemize}
    \item \textbf{Inner Loop (Per-Player Refinement, i.e., Refinement of Individual Predictions)}
    \begin{itemize}
        \item Each player (ligand or protein) dynamically refines their own predictions through multiple iterations.
        \item The predicted ligand pose (from the ligand player) or pocket structure (from the pocket player) is fed back into their own model for further optimization.
        \item This self-refinement process enhances the accuracy of individual predictions before sharing them with the opposing player.
        \item This refinement is the best response to the opponent's pose.
    \end{itemize}
\end{enumerate}
This bidirectional loop feedback mechanism ensures that both ligand and pocket docking structures are optimized in a synergistic manner, leading to more accurate and physically plausible docking results.

As shown in Algorithm \ref{alg_loopplay} and Figures \ref{fig:ligand} and \ref{fig:protein}, the overall procedure is: Starting with an apo ligand generated by RDKit and an apo protein predicted by AlphaFold2, the ligand is initially placed at the protein's center to form the protein--ligand graph. This graph is processed by the pocket prediction module to identify the binding pocket residues. Guided by the predicted pocket, the ligand is repositioned from the protein center to the pocket center, forming the protein pocket--ligand graph (unbound complex). This graph is then input into the ligand and pocket docking modules, where it undergoes two-level loop feedback for refinement. Finally, the predicted holo structures for the ligand and protein pocket (bound complex)   are obtained from their respective docking modules.

 Our LoopPlay leverages a cooperative yet adversarial dynamic between the two players, fostering mutual improvement through loop feedback. It has the following features:
 \begin{itemize}
     \item \textbf{Cooperative Learning:}
  The ligand player and the protein player are interdependent. The ligand player relies on the protein player’s representation of the pocket poses to optimize its docking poses, while the protein player uses the ligand poses to refine its understanding of the pocket docking. By alternating training, each module provides increasingly challenging and informative inputs to the other, driving both to improve.
\item \textbf{Adversarial Dynamics:} The alternating training resembles an adversarial game. Each player learns the best response against the opponent's current strategy. This dynamic process ensures that both players are exposed to progressively harder challenges, preventing overfitting to static or simplistic scenarios.
\item \textbf{Exploration of Solution Space:}
   The loop feedback mechanism encourages exploration of a diverse set of ligand poses and pocket poses. As each player proposes new poses, the other player must adapt to evaluate them, potentially uncovering novel binding modes. 
\end{itemize}

Thus, our LoopPlay is particularly effective for tasks where the interaction between two entities (e.g., ligand and protein) is complex and interdependent. By allowing the ligand player and protein player to iteratively challenge and improve each other, the algorithm achieves a balance between exploration and exploitation, leading to robust and high-performing models for molecular docking. 

\section{Theoretical Properties}

In this section, we discuss the theoretical properties of our LoopPlay algorithm.
\subsection{Convergence}
We demonstrate that LoopPlay converges to a Nash equilibrium, as outlined in Theorem \ref{theorem_convergence}. This convergence is based on the assumption that each optimization step of LoopPlay results in a loss reduction exceeding a certain nonnegative value, ensuring the implementation of a sequence of effective reduction strategies.

Our theoretical findings indicate that the algorithm converges when each self-play optimization step achieves a loss reduction greater than a positive threshold. While this assumption may seem strict, it is consistent with classical iterative optimization frameworks, where consistent improvement is necessary for convergence. In practice, many optimization algorithms—especially those based on gradients—do not always meet this condition at every step but still demonstrate reliable convergence. The key factor is the establishment of a sequence of reduction strategies. Gradient-based algorithms can maintain a downward trend in loss, enabling them to achieve this even when the initial assumption is not strictly fulfilled.
\begin{Theorem}\label{theorem_convergence}
    If each optimization step of LoopPlay achieves a reduction in loss greater than some $ \varepsilon > 0 $ (i.e., $ \mathcal{J}_L(\theta_L^{t+1}, \theta_P^t) < \mathcal{J}_L(\theta_L^t, \theta_P^t) - \epsilon $ or  $ \mathcal{J}_P(\theta_L^t, \theta_P^{t+1}) < \mathcal{J}_P(\theta_L^t, \theta_P^t) - \epsilon $) with compact $\Theta_L$ and $\Theta_P$ and bounded $\mathcal{J}_L$ and $\mathcal{J}_P$, then LoopPlay converges to a Nash equilibrium.
\end{Theorem}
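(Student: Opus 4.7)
The plan is to exhibit an exact potential function for \emph{the Docking Game} whose descent is implied by each LoopPlay update, and then combine boundedness with compactness to force a limit point that admits no unilateral improvement.

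First I would introduce the potential
\[
\Phi(\theta_L, \theta_P) := \alpha_1 \mathcal{L}_{\text{pocket\_pred}}(\theta_L) + \alpha_2 \mathcal{L}_{\text{ligand\_coord}}(\theta_L) + \beta \mathcal{L}_{\text{pocket\_coord}}(\theta_P) + \gamma \mathcal{L}_{\text{dis\_map}}(\theta_L, \theta_P),
\]
which can be rewritten as $\Phi = \mathcal{J}_L + \mathcal{J}_P - \gamma \mathcal{L}_{\text{dis\_map}}$. The decisive observation is that the shared distance-map term is counted once rather than twice: on a Ligand-Player turn, only $\theta_L$ moves, the summand $\beta\,\mathcal{L}_{\text{pocket\_coord}}(\theta_P)$ is frozen, and a direct cancellation yields $\Phi(\theta_L^{t+1}, \theta_P^t) - \Phi(\theta_L^t, \theta_P^t) = \mathcal{J}_L(\theta_L^{t+1}, \theta_P^t) - \mathcal{J}_L(\theta_L^t, \theta_P^t) < -\varepsilon$ by the theorem's hypothesis; the symmetric identity holds on a Protein-Player turn. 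Hence $\Phi$ is an exact potential along the LoopPlay trajectory and decreases by at least $\varepsilon$ at \emph{every} iteration, regardless of which player is active.

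Next I would exploit boundedness: since $\mathcal{J}_L$ and $\mathcal{J}_P$ are bounded, so is $\Phi$, and in particular $\Phi_{\min} := \inf_{\Theta_L \times \Theta_P} \Phi > -\infty$. Combining this with the per-step descent bound shows that the assumed $\varepsilon$-improvement cannot persist beyond $N := \lceil (\Phi(\theta_L^0, \theta_P^0) - \Phi_{\min})/\varepsilon \rceil$ steps, otherwise $\Phi$ would be driven below $\Phi_{\min}$. Therefore LoopPlay reaches, in finitely many rounds, an iterate $(\theta_L^T, \theta_P^T)$ at which no $\varepsilon$-improving move exists for either player; by definition, $\mathcal{J}_L(\theta_L, \theta_P^T) \ge \mathcal{J}_L(\theta_L^T, \theta_P^T) - \varepsilon$ for all $\theta_L \in \Theta_L$ and analogously for $\theta_P$, i.e.\ an $\varepsilon$-Nash equilibrium of \emph{the Docking Game}.

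To close the gap to a genuine Nash equilibrium, I would invoke compactness of $\Theta_L \times \Theta_P$: applying the argument along a decreasing sequence $\varepsilon_k \downarrow 0$ yields a family of terminal iterates from which a convergent subsequence $(\theta_L^{T_k}, \theta_P^{T_k}) \to (\theta_L^*, \theta_P^*)$ can be extracted; the continuity of the component losses in the parameters (Huber, BCE, and MSE composed with the FABind-layer outputs) then lets the $\varepsilon_k$-Nash inequalities pass to the limit and deliver the equilibrium conditions stated in the preliminaries. The hard part is precisely this last step, since the hypothesis is stated for a single fixed $\varepsilon$: one either has to read "converges" as "converges to an $\varepsilon$-Nash equilibrium in finitely many iterations," or supplement the argument with continuity and the $\varepsilon_k \to 0$ diagonalization above. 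The potential-function identity and the counting bound are otherwise essentially bookkeeping.
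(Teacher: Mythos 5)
Your proposal is correct and takes essentially the same route as the paper: the identical exact potential $\mathcal{F}(\theta_L,\theta_P)=\alpha_1 \mathcal{L}_{\text{pocket\_pred}}(\theta_L)+\alpha_2 \mathcal{L}_{\text{ligand\_coord}}(\theta_L)+\beta\mathcal{L}_{\text{pocket\_coord}}(\theta_P)+\gamma\mathcal{L}_{\text{dis\_map}}(\theta_L,\theta_P)$, per-step descent of the potential by at least $\varepsilon$ under alternating updates, and boundedness forcing the improvement path to terminate (the paper phrases this as the approximate finite improvement property of Monderer--Shapley). Your final caveat is also apt: with a single fixed $\varepsilon$ this argument, in the paper as in your write-up, strictly yields termination at an $\varepsilon$-equilibrium, and your $\varepsilon_k\downarrow 0$ compactness-and-continuity refinement is the kind of supplement needed for an exact Nash equilibrium, which the paper instead asserts by citing the equilibrium results for continuous potential games with compact strategy sets.
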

\begin{proof}
We first show that the loss function properties ensure that \textit{the Docking Game} defined by the alternating optimization of players L and P is a potential game.

A potential game \cite{monderer1996potential} is a game in strategic form where there exists a potential function $\mathcal{F} $ such that for each player $ i $, the change in player $i$'s payoff (or loss) when unilaterally changing their strategy matches the change in the potential function $\mathcal{F}$.

To be an exact potential game, there must exist a function $ L(\theta_L, \theta_P) $ for all $\theta_L,\theta'_L\in\Theta_L$, and $\theta_P,\theta'_P\in\Theta_P$, such that: 
\begin{itemize}
    \item For player L: $ \mathcal{J}_L(\theta_L, \theta_P) - \mathcal{J}_L(\theta_L', \theta_P) = \mathcal{F}(\theta_L, \theta_P) - \mathcal{F}(\theta_L', \theta_P) $.
    \item For player P: $ \mathcal{J}_P(\theta_L, \theta_P) - \mathcal{J}_P(\theta_L, \theta'_P) = \mathcal{F}(\theta_L, \theta_P) - \mathcal{F}(\theta_L, \theta'_P) $.
\end{itemize}

Let’s construct a potential function. Consider a candidate potential function:
$$\mathcal{F}(\theta_L, \theta_P) = \alpha_1 \mathcal{L}_{\text{pocket\_pred}}(\theta_L) + \alpha_2 \mathcal{L}_{\text{ligand\_coord}}(\theta_L) + \beta\mathcal{L}_{\text{pocket\_coord}}(\theta_P) + \gamma\mathcal{L}_{\text{dis\_map}}(\theta_L, \theta_P). $$

For player L:
\begin{align*}
    &\mathcal{J}_L(\theta_L, \theta_P) - \mathcal{J}_L(\theta_L', \theta_P)\\
    =& \left[ \alpha_1 \mathcal{L}_{\text{pocket\_pred}}(\theta_L) + \alpha_2 \mathcal{L}_{\text{ligand\_coord}}(\theta_L) + \gamma\mathcal{L}_{\text{dis\_map}}(\theta_L, \theta_P) \right]  \\
    &- \left[ \alpha_1 \mathcal{L}_{\text{pocket\_pred}}(\theta_L') + \alpha_2 \mathcal{L}_{\text{ligand\_coord}}(\theta_L') + \gamma\mathcal{L}_{\text{dis\_map}}(\theta_L', \theta_P) \right]\\
= &\alpha_1 \left[ \mathcal{L}_{\text{pocket\_pred}}(\theta_L) - \mathcal{L}_{\text{pocket\_pred}}(\theta_L') \right] + \alpha_2 \left[ \mathcal{L}_{\text{ligand\_coord}}(\theta_L) - \mathcal{L}_{\text{ligand\_coord}}(\theta_L') \right] \\
    &+ \gamma\left[ \mathcal{L}_{\text{dis\_map}}(\theta_L, \theta_P) -  \mathcal{L}_{\text{dis\_map}}(\theta_L', \theta_P) \right].
\end{align*}

Compute the change in $ \mathcal{F} $:
\begin{align*}
    &\mathcal{F}(\theta_L, \theta_P) - \mathcal{F}(\theta_L', \theta_P)\\ 
    =& \left[ \alpha_1 \mathcal{L}_{\text{pocket\_pred}}(\theta_L) + \alpha_2 \mathcal{L}_{\text{ligand\_coord}}(\theta_L) + \beta\mathcal{L}_{\text{pocket\_coord}}(\theta_P) + \gamma\mathcal{L}_{\text{dis\_map}}(\theta_L, \theta_P) \right] \\
    &- \left[ \alpha_1 \mathcal{L}_{\text{pocket\_pred}}(\theta_L') + \alpha_2 \mathcal{L}_{\text{ligand\_coord}}(\theta_L') + \beta\mathcal{L}_{\text{pocket\_coord}}(\theta_P) + \gamma\mathcal{L}_{\text{dis\_map}}(\theta_L', \theta_P) \right]\\
=& \alpha_1 \left[ \mathcal{L}_{\text{pocket\_pred}}(\theta_L) - \mathcal{L}_{\text{pocket\_pred}}(\theta_L') \right] + \alpha_2 \left[ \mathcal{L}_{\text{ligand\_coord}}(\theta_L) - \mathcal{L}_{\text{ligand\_coord}}(\theta_L') \right] \\
    &+ \gamma\left[ \mathcal{L}_{\text{dis\_map}}(\theta_L, \theta_P) - \mathcal{L}_{\text{dis\_map}}(\theta_L', \theta_P) \right]\\
    =&\mathcal{J}_L(\theta_L, \theta_P) - \mathcal{J}_L(\theta_L', \theta_P)
\end{align*}
This matches exactly, so $ \mathcal{F} $ satisfies the potential function condition for player L.

For player P:
\begin{align*}
    &\mathcal{J}_P(\theta_L, \theta_P) - \mathcal{J}_P(\theta_L, \theta_P')\\
    = &\left[\beta \mathcal{L}_{\text{pocket\_coord}}(\theta_P) + \gamma\mathcal{L}_{\text{dis\_map}}(\theta_L, \theta_P) \right] - \left[\beta \mathcal{L}_{\text{pocket\_coord}}(\theta_P') + \gamma\mathcal{L}_{\text{dis\_map}}(\theta_L, \theta_P') \right]\\
= &\beta\left[ \mathcal{L}_{\text{pocket\_coord}}(\theta_P) - \mathcal{L}_{\text{pocket\_coord}}(\theta_P') \right] + \gamma\left[ \mathcal{L}_{\text{dis\_map}}(\theta_L, \theta_P) - \mathcal{L}_{\text{dis\_map}}(\theta_L, \theta_P') \right].
\end{align*}

Compute the change in $ \mathcal{F} $:
\begin{align*}
    &\mathcal{F}(\theta_L, \theta_P) - \mathcal{F}(\theta_L, \theta_P')\\
    = &\left[ \alpha_1 \mathcal{L}_{\text{pocket\_pred}}(\theta_L) + \alpha_2 \mathcal{L}_{\text{ligand\_coord}}(\theta_L) + \beta\mathcal{L}_{\text{pocket\_coord}}(\theta_P) + \gamma\mathcal{L}_{\text{dis\_map}}(\theta_L, \theta_P) \right]\\
    &- \left[ \alpha_1 \mathcal{L}_{\text{pocket\_pred}}(\theta_L) + \alpha_2 \mathcal{L}_{\text{ligand\_coord}}(\theta_L) + \beta\mathcal{L}_{\text{pocket\_coord}}(\theta_P') + \gamma\mathcal{L}_{\text{dis\_map}}(\theta_L, \theta_P') \right]\\
=& \beta\left[ \mathcal{L}_{\text{pocket\_coord}}(\theta_P) - \mathcal{L}_{\text{pocket\_coord}}(\theta_P') \right] + \gamma\left[ \mathcal{L}_{\text{dis\_map}}(\theta_L, \theta_P) - \mathcal{L}_{\text{dis\_map}}(\theta_L, \theta_P') \right]\\
=&\mathcal{J}_P(\theta_L, \theta_P) - \mathcal{J}_P(\theta_L, \theta_P')
\end{align*}

This also matches exactly, so $ \mathcal{F}$ satisfies the condition for player P.

Thus, the function $ \mathcal{F}(\theta_L, \theta_P) = \alpha_1 \mathcal{L}_{\text{pocket\_pred}}(\theta_L) + \alpha_2 \mathcal{L}_{\text{ligand\_coord}}(\theta_L) + \beta\mathcal{L}_{\text{pocket\_coord}}(\theta_P) + \gamma\mathcal{L}_{\text{dis\_map}}(\theta_L, \theta_P) $ is an exact potential function for the game, because the change in each player’s loss when changing their strategy (parameters) equals the change in $ \mathcal{F} $. 

Second, we show that the self-play structure (alternating optimization) induces an approximate finite improvement property (AFIP) \cite{monderer1996potential}, which is sequence of strategy profiles $ (\theta^0, \theta^1, \ldots) $ where, at each step $ t $, $\theta^t\in \Theta_L \times \Theta_P $, one player $ i $ changes their strategy from $ \theta^{t-1}_i $ to $ \theta^t_i $, and the loss improves by at least $ \epsilon $, i.e., $ \mathcal{J}_i(\theta^t) < \mathcal{J}_i(\theta^{t-1}) - \epsilon $.

$ \mathcal{F}$ is bounded because $ \mathcal{J}_L$ and $ \mathcal{J}_P$ are bounded.  Then the potential game with  $ \mathcal{F}$  has an AFIP.
That is, based on the assumption that each optimization step of LoopPlay achieves a reduction in loss greater than some $ \varepsilon > 0 $ (i.e., $ \mathcal{J}_L(\theta_L^{t+1}, \theta_P^t) < \mathcal{J}_L(\theta_L^t, \theta_P^t) - \epsilon $ or  $ \mathcal{J}_P(\theta_L^t, \theta_P^{t+1}) < \mathcal{J}_P(\theta_L^t, \theta_P^t) - \epsilon $), 
there is  a sequence of strategy profiles $ (\theta^0, \theta^1, \ldots) $ where, at each step $ t $, $\theta^t\in \Theta_L \times \Theta_P $, one player $ i $ changes their strategy from $ \theta^{t-1}_i $ to $ \theta^t_i $, and the loss improves by at least $ \epsilon $, i.e., $ \mathcal{J}_i(\theta^t) < \mathcal{J}_i(\theta^{t-1}) - \epsilon $. This sequence of strategy profiles is an AFIP and finite because $ \mathcal{F}$ is bounded. 

\textit{The Docking Game} is a continuous potential game with compact strategy
 sets, and eventually reaches a pure-strategy equilibrium point \cite{monderer1996potential}. The AFIP generated by our LoopPlay algorithm ensures that LoopPlay converges to a Nash equilibrium \cite{monderer1996potential}.
\end{proof}
\subsection{Generalization}
We use the generalization bound to demonstrate the advantage of our LoopPlay algorithm.

Given a model, with probability at least $1-\delta$, we have the following  generalization bound \cite{mohri2018foundations,bartlett2017spectrally}:
$$\mathbb{E}_{\mathcal{D}}[J] -\frac{1}{n} \sum_{i=1}^n J(\mathbf{x}_i, y_i) \leq  \frac{C_{depth}}{\sqrt{n}} + k \sqrt{\frac{\log(1/\delta)}{2n}}.$$
where
\begin{itemize}
\item $\mathbb{E}_{\mathcal{D}}[J]$: The expected risk, i.e., the average loss of the model over the entire data distribution $\mathcal{D}$.
\item $\frac{1}{n} \sum_{i=1}^n J(\mathbf{x}_i, y_i)$: The empirical risk, i.e., the average loss on the training dataset of $n$ samples $(\mathbf{x}_i, y_i)$.
\item The difference represents the generalization error, or how much worse the model performs on unseen data compared to the training data.
    \item $C_{depth}$: A constant that grows with the depth of the neural network, reflecting the model's capacity (deeper networks represent more complex functions).
\item $n$: Number of training samples.
\item $\delta$ and $k$ are constants. 
\end{itemize}
This generalization bound shows that: Deeper networks (larger $C_{depth}$) or smaller datasets (smaller $n$) increase the risk of overfitting.

Each feedback loop adds computational layers, increasing the network depth, which also augments the data. According to the aforementioned bound, increasing the number of feedback loops enhances the data (effectively increasing the sample size $( n )$) and initially reduces the generalization bound. However, making the model deeper also increases this bound, leading to a trade-off. This implies that LoopPlay can improve generalization by utilizing the appropriate number of loops.


\section{Experimental Evaluation}
\subsection{Settings}
\subsubsection{Dataset.}

We utilize the dataset provided by \cite{zhang2025fast}, which was obtained by processing the PDBBind v2020 dataset \cite{liu2017forging}. This PDBBind v2020 dataset is a widely recognized benchmark in molecular docking research \cite{Pei2023, Lu2024, Lu2022, Corso2023}. PDBBind v2020 includes 19,443 experimentally determined protein-ligand complexes, complete with their 3D structures. 
As the dataset lacks apo protein structures, AlphaFold2 was used to predict the apo conformations of these proteins.    303 complexes recorded after 2019 were designated as the test set, and 734 complexes recorded before 2019 were selected as the validation set, with the remaining complexes used for training. For the training set,   complexes that cannot be processed by RDKit or TorchDrug were excluded, as well as those with protein amino acid chains exceeding 1,500 residues or molecules with more than 150 heavy atoms, resulting in a refined training set of 12,807 complexes. For further details, please refer to \cite{zhang2025fast}.

\subsubsection{Baselines.}
We evaluate our proposed method against a range of baselines:

\begin{enumerate}
    \item Traditional Molecular Docking Software:
    \begin{itemize}
        \item \textbf{Vina} \cite{Trott2010}: AutoDock Vina is a widely used open-source docking program that enhances performance through an improved scoring function based on X-score \cite{Wang2002} and utilizes a Broyden-Fletcher-Goldfarb-Shanno optimization method \cite{Nocedal1999}. It also employs an advanced search strategy.
        \item \textbf{Glide} \cite{Friesner2004}: Glide systematically explores the ligand’s conformational, orientational, and positional spaces relative to a rigid protein receptor. It utilizes hierarchical filters and the ChemScore function \cite{Eldridge1997} to refine its docking process.
        \item \textbf{Gnina} \cite{McNutt2021}:  Gnina integrates convolutional neural networks into its scoring function and employs Monte Carlo sampling techniques to thoroughly explore the conformational space of ligands
    \end{itemize}
    \item  Deep Learning-Based Methods with Rigid Protein Assumption:
    \begin{itemize}
        \item \textbf{TankBind} \cite{Lu2022}:   TankBind utilizes P2Rank \cite{Krivak2018}   to identify binding pockets by dividing the protein into functional blocks. It then applies a trigonometry-aware graph neural network to model protein-ligand interactions, predict distance matrices, and optimize ligand structures.
        \item \textbf{FABind} \cite{Pei2023}:   FABind features an end-to-end framework that combines binding pocket prediction with docking, streamlining the process by eliminating the need for external pocket detection tools.
        \item \textbf{FABind+} \cite{gao2025fabind+}: An improved version of FABind, FABind+ incorporates dynamic adjustments of pocket radius and uses permutation-invariant loss to enhance the accuracy of predicted ligand structures.
        \item \textbf{DiffDock} \cite{Corso2023}:   DiffDock employs diffusion models \cite{Yang2023}   to refine ligand conformations, treating docking as a generative modeling task on a non-Euclidean manifold to reduce degrees of freedom. It also includes a confidence model to assess poses generated from multiple samplings.
        \item \textbf{DiffDock-L} \cite{Corso2024}: An advanced iteration of DiffDock, DiffDock-L improves performance by scaling data and model size while incorporating synthetic data to enhance generalization.
    \end{itemize}
 \item Recent Deep Learning-Based Flexible Docking Methods: \textbf{DynamicBind} \cite{Lu2024} and \textbf{FABFlex} \cite{zhang2025fast} are introduced in the related work section. 
\end{enumerate}

 \begin{table}[]
      \caption{Parameters for training LoopPlay}
     \label{tab:implementation}
     \centering
     \begin{tabular}{|c|c|c|c|c|c|c|c|c|c|c|}\hline
         Learning rate &Epoch&Batch size&Dropout&Optimizer&Scheduler &$\alpha^{cls}_1$&$\alpha^{center}_1$&$\alpha_2$&$\beta$&$\gamma$  \\\hline
        $5e-5$  &200 &4&0.1&Adam&LinerLR&1&0.05&50&15&1\\ \hline
     \end{tabular}
 \end{table}

\subsubsection{Model Configuration.} In our experiments, we set the number of loops in LoopPlay as follows: \(M_L = M_P = 2\) and \(N_L = N_P = 6\). The FABind model configuration consists of 1 layer for the pocket prediction module of the ligand player, 5 layers for the ligand docking module of the ligand player, and 5 layers for the protein pocket docking module of the protein player. The corresponding hidden sizes for these modules are \{128, 512, 512\}. Additional parameter settings can be found in Table \ref{tab:implementation}. We initialized the modules of both the ligand player and the protein player using the pretrained models provided by \cite{zhang2025fast}. The experiments were conducted using the PyTorch framework on eight NVIDIA A800 80GB GPUs.     
\begin{table}[h]
\centering
\caption{Ligand performance comparison of blind flexible docking: The top results are shown in bold, the second-best results are underlined, and results that are the best only among learning-based methods are italicized. The average runtime for each method is reported in seconds.}
\label{tab:ligand_performance}
\scalebox{0.9}{
\begin{tabular}{lcccccc|ccccccc} 
\toprule
 & \multicolumn{5}{c}{On All Cases} & & \multicolumn{6}{c}{On Unseen Protein Receptors} & \\
\cmidrule(lr){2-6} \cmidrule(lr){8-13}
  & \multicolumn{4}{c}{Percentiles $\downarrow$} & \multicolumn{2}{c}{\% Below $\uparrow$}  & \multicolumn{4}{c}{Percentiles $\downarrow$} & \multicolumn{2}{c}{\% Below $\uparrow$} &  Average\\
  \cmidrule(lr){2-5} \cmidrule(lr){6-7} \cmidrule(lr){8-11} \cmidrule(lr){12-13}
Algorithm & {25\%} & {50\%} & {75\%} & {Mean} & {$<2\ \si{\angstrom}$} & {$<5\ \si{\angstrom}$}  & {25\%} & {50\%} & {75\%} & {Mean} & {$<2\ \si{\angstrom}$} & {$<5\ \si{\angstrom}$} &  Runtime(s)\\
\midrule
\multicolumn{14}{c}{\textit{Traditional Docking Software}} \\
Vina &4.79& 7.14& 9.21& 7.14& 6.67 &27.33& 5.27& 7.06& 8.84& 7.15& 6.25& 23.21& 205\\
 Glide &2.84 &5.77 &8.04 &5.81 &14.66& 40.60& 2.38& 5.01 &\textbf{7.17} &\textbf{5.21} &21.36& 49.51 &1405\\
 Gnina &2.58& 5.17 &8.42& 5.76& 19.32 &48.47& 2.03 &4.96& 7.35 &\underline{5.33}& 24.55& 50.91 &146\\
\multicolumn{14}{c}{\textit{Deep Learning-Based   Docking Methods}} \\
TankBind& 2.82 &4.53& 7.79 &7.79& 8.91 &54.46& 2.88& 4.45 &7.53& 7.60 &4.39& 58.77 &0.87\\
 FABind& 2.19& 3.73 &8.39& 6.63& 22.11& 60.73& 2.73& 4.83& 9.35& 7.15& 8.77& 50.88& 0.12\\
 FABind+ &1.58 &\underline{2.79}& 6.69 &5.63& 35.64& 66.01& 1.93& \textbf{3.13} &8.59& 6.76& 27.19& 57.89 &0.16\\
 DiffDock& 1.82& 3.92& 6.83& 6.07& 29.04& 60.73& 1.97 &4.82 &8.03 &7.41& 26.32 &51.75 &82.83\\
 DiffDock-L& 1.55& 3.22& 6.86& 5.99& 36.75& 62.58& \underline{1.86}& \underline{3.16} &9.09& 7.14 &\underline{29.82}& \underline{61.40} &58.72\\
  DynamicBind &1.57 &3.16& 7.14& 6.19& 33.00& 64.69 &2.23 &4.02& 10.23 &8.27 &20.18& 54.39& 102.12\\
  FABFlex& \textbf{1.40} &2.96& \underline{6.16}& \underline{5.44}& \underline{40.59}& \underline{68.32}&\textbf{1.81} &3.51& 8.03 &7.17 &\textbf{32.46}& 59.65& 0.17\\
  \hline LoopPlay&\underline{1.49}&\textbf{2.78}&\textbf{5.70}&\textbf{4.90}&\textbf{41.91}&\textbf{71.95}&\textbf{1.81}&3.33&\underline{\it 7.32}&\textit{6.59}&\textbf{32.46}&\textbf{62.28}&0.32\\
\bottomrule
\end{tabular}
}
 \end{table}

\begin{table}[h]
\centering
\caption{Ablation Study on ligand performance comparison of blind flexible docking. LoopPlay (M,N) represents $M_L=M_P=M$ and $N_L=N_P=N$.}
\label{tab:ablation_performance}
\scalebox{0.9}{
\begin{tabular}{lcccccc|ccccccc} 
\toprule
 & \multicolumn{5}{c}{On All Cases} & & \multicolumn{6}{c}{On Unseen Protein Receptors} & \\
\cmidrule(lr){2-6} \cmidrule(lr){8-13}
  & \multicolumn{4}{c}{Percentiles $\downarrow$} & \multicolumn{2}{c}{\% Below $\uparrow$}  & \multicolumn{4}{c}{Percentiles $\downarrow$} & \multicolumn{2}{c}{\% Below $\uparrow$} &  Average\\
  \cmidrule(lr){2-5} \cmidrule(lr){6-7} \cmidrule(lr){8-11} \cmidrule(lr){12-13}
Algorithm & {25\%} & {50\%} & {75\%} & {Mean} & {$<2\ \si{\angstrom}$} & {$<5\ \si{\angstrom}$}  & {25\%} & {50\%} & {75\%} & {Mean} & {$<2\ \si{\angstrom}$} & {$<5\ \si{\angstrom}$} &  Runtime(s)\\
\midrule
  FABFlex& 1.40 &2.96& {6.16}& {5.44}& 40.59& {68.32}&\underline{1.81} &3.51& 8.03 &7.17 &\textbf{32.46}& 59.65& 0.17\\
  \hline LoopPlay&{1.49}&{2.78}&{5.70}&{4.90}&\textbf{41.91}&\textbf{71.95}&\underline{1.81}&{3.33}&{7.32}&{6.59}&\textbf{32.46}&{62.28}&0.32\\ \hline 
  LoopPlay (1,1)&2.0&3.58&6.71&5.59&24.75&66.01&2.43&4.16&8.44&7.15&14.91&57.02         &0.04\\
  LoopPlay (2,1)&1.59&2.97&5.65&5.04&34.98&71.61&2.20&3.44&7.70&6.77&21.05&63.16  &0.07\\
  LoopPlay (2,2)&1.42&2.70&5.56&5.01&38.28&69.31&1.92&3.64&7.28&6.68&26.32&60.53 &0.12\\
  LoopPlay (2,3)&1.47&\underline{2.62}&5.93&5.05&40.26&69.64&\textbf{1.78}&3.45&7.46&6.73&\underline{30.70}&63.16 &0.17\\
  LoopPlay (2,4)&\textbf{1.38}&2.82&\underline{5.55}&4.89&39.60&69.97&1.93&3.57&7.13&6.65&26.32&59.65 &0.23\\
  LoopPlay (2,5) &   1.42& \textbf{2.61}  & 5.76& 4.92& 41.58 & 70.63    & 1.88 & 3.28& 6.91 & 6.48 & 27.19 & \underline{64.04} &0.28\\
  LoopPlay (2,6)&{1.49}&{2.78}&{5.70}&{4.90}&\textbf{41.91}&\textbf{71.95}&\underline{1.81}&{3.33}&{7.32}&{6.59}&\textbf{32.46}&{62.28}&0.32\\
  LoopPlay (2,7)&   \underline{1.39} & 2.83 &6.16 & 5.18 &  38.28  & 68.32 & 1.91 &3.40 & 8.22 & 6.90 & 25.44& 58.77  &0.39\\
  LoopPlay (2,8)&1.58&2.92&\textbf{5.44}&\underline{4.86}&36.30&\textbf{71.95}&2.15&3.29&\underline{6.72}&\textbf{6.22}&22.81&\textbf{66.67} &0.45\\
  LoopPlay (2,9)&   1.52 & 2.74 & 5.97 & 5.00 &\underline{41.58} & \underline{71.62} &   1.83 & \underline{3.25} & 6.75 & 6.57& \underline{30.70} & 63.16  &0.50\\
  LoopPlay (2,10)&1.42&2.73&5.87&\textbf{4.82}&39.93&69.97&1.91&\textbf{3.23}&\textbf{6.65}&\underline{6.30}&27.19&62.28 &0.55\\
\bottomrule
\end{tabular}
}
 \end{table}

\subsubsection{Evaluation Metric.} The evaluation metric known as Ligand RMSD (Root Mean Square Deviation) quantifies the difference between the predicted and actual Cartesian coordinates of ligand atoms. This metric reflects the model's competence in accurately identifying the ligand's conformation at the atomic level. 
The RMSD between the predicted and true atomic Cartesian coordinates of the ligand is calculated as follows:

\[
\text{RMSD} = \sqrt{\frac{1}{n} \sum_{i=1}^{n} \left( (x^1_i - \mathbf{\hat{x}}^1_i)^2 + (x^2_i - \mathbf{\hat{x}}^2_i)^2 + (x^3_i - \mathbf{\hat{x}}^3_i)^2 \right
)}
\]

where:
\begin
{itemize}
    \item $n$
 is the number of atoms,
    \item $(x^1_i, x^2_i, x^3_i)$ are the true coordinates of the $i$-th atom,
    \item $(\mathbf{\hat{x}}^1_i, \mathbf{\hat{x}}^2_i, \mathbf{\hat{x}}^3_i)$ are the predicted coordinates of the $i$-th atom.
\end
{itemize}
\subsection{Results}
\subsubsection{Performance in Blind Flexible Docking}
A comparison of ligand performance across different docking methods is presented in Table \ref{tab:ligand_performance}. The data in the left section of the table demonstrates that LoopPlay consistently outperforms both traditional docking software and modern deep learning-based methods across nearly all metrics. It ranks either as the best or second-best overall, excelling in almost every evaluated parameter. Notably, LoopPlay achieves approximately a 10\% improvement in average RMSD compared to the previous state-of-the-art method, FABFlex.

A ligand structure prediction is considered successful if its RMSD is within 2$\si{\angstrom}$ of the true holo ligand structure \cite{Lu2024, zhang2025fast}. Therefore, achieving an RMSD of less than 2$\si{\angstrom}$ is a crucial metric for evaluating molecular docking methods. LoopPlay attains a success rate of 41.91\% for this specific metric, significantly surpassing baseline methods and highlighting its superior ability to predict accurate ligand binding structures.

\subsubsection{Performance in Blind Flexible Docking with Unseen Proteins} To conduct a more rigorous evaluation, we implemented a filtering step to exclude samples with protein UniProt IDs that were not present in the training data. This stricter assessment utilized 114 protein-ligand complexes featuring protein receptors that were unseen during training, allowing us to test the generalization ability of each method. The results of this evaluation can be found in the right section of Table \ref{tab:ligand_performance}. 

LoopPlay outperforms previous methods across most metrics. It consistently ranks as either the best or second-best in a majority of these metrics, achieving the top rank in half of them. Notably, LoopPlay surpasses modern deep learning-based methods in nearly all metrics. In terms of average RMSD, LoopPlay shows an improvement of approximately 8\% compared to the previous state-of-the-art deep learning method, FABFlex. Furthermore, LoopPlay records the highest success rate of 32.46\% in the ligand RMSD < 2$\si{\angstrom}$ metric, highlighting its superior ability to generalize to new proteins.

\subsubsection{Inference Efficiency} 
High efficiency is crucial for the widespread adoption of methods in practical applications. The rightmost column of Table \ref{tab:ligand_performance} compares the average inference time for each protein-ligand pair. Traditional docking tools, such as Vina, Glide, and Gnina, have significantly longer inference times. Among regression-based methods, TankBind, FABind, FABind+, FABFlex, and our LoopPlay are notably faster than sampling-based methods like DiffDock and DynamicBind. Remarkably, LoopPlay achieves an inference speed of just 0.32 seconds, making it approximately 319 times faster than DynamicBind, a recent flexible docking sampling-based method that averages 102.12 seconds. As regression-based methods, the speed of LoopPlay is comparable to that of TankBind, FABind, FABind+, and FABFlex.
\begin{table}[tp]
\centering
\caption{Ablation Study on ligand performance (centroid distance) comparison of blind flexible docking. LoopPlay (M,N) represents $M_L=M_P=M$ and $N_L=N_P=N$}
\label{tab:ablation_centroid_dis}
\scalebox{0.88}{
\begin{tabular}{lcccccc|cccccccc} 
\toprule
 & \multicolumn{5}{c}{On All Cases} & & \multicolumn{6}{c}{On Unseen Protein Receptors} & &\\
\cmidrule(lr){2-6} \cmidrule(lr){8-13}
  & \multicolumn{4}{c}{Percentiles $\downarrow$} & \multicolumn{2}{c}{\% Below $\uparrow$}  & \multicolumn{4}{c}{Percentiles $\downarrow$} & \multicolumn{2}{c}{\% Below $\uparrow$} &  Pocket&Pocket\\
  \cmidrule(lr){2-5} \cmidrule(lr){6-7} \cmidrule(lr){8-11} \cmidrule(lr){12-13}
Algorithm & {25\%} & {50\%} & {75\%} & {Mean} & {$<2\ \si{\angstrom}$} & {$<5\ \si{\angstrom}$}  & {25\%} & {50\%} & {75\%} & {Mean} & {$<2\ \si{\angstrom}$} & {$<5\ \si{\angstrom}$} &  Accuracy&RMSD\\
\midrule
  FABFlex&  
  0.56&1.11&2.78&3.67&67.33&83.50&0.72&1.45&5.01&5.27&60.53&74.56&87.08&1.10\\
 LoopPlay&0.58&1.03&2.75&3.11&65.68&87.46&\textbf{0.69}&\underline{1.38}&4.20&4.70&56.14&78.95&86.46&1.19\\\hline
  LoopPlay (1,1)&0.86&1.66&3.02&3.63&59.41&84.49&1.22&1.92&4.21&5.11&50.88&76.32  &82.27&1.02\\
  LoopPlay (2,1)&0.59&1.29&\textbf{2.37}&3.25&68.98&86.80&1.03&1.84&3.94&4.97&56.14&78.95&82.97&1.05\\
  LoopPlay (2,2)&0.56&1.12&2.54&3.22&68.65&86.80&0.87&1.81&3.53&4.80&57.89&78.95&86.71&1.05\\
  LoopPlay (2,3)&0.57&1.06&2.51&3.21&68.98&87.13&\underline{0.70}&1.48&3.90&4.98&59.65&77.19&86.47&1.05\\
  LoopPlay (2,4)&\textbf{0.50}&1.07&2.52&3.12&67.66&\underline{87.79}&0.78&2.90&4.16&4.95&48.25&78.95&86.25&1.09\\
  LoopPlay (2,5)& \underline{0.52} &1.13 & 2.57& 3.11 & \textbf{69.64} & 87.13&  0.72 &1.66&\underline{3.15} & 4.58 & 59.65  & 78.95   &86.45&1.09\\
  LoopPlay (2,6)&0.58&1.03&2.75&3.11&65.68&87.46&\textbf{0.69}&\underline{1.38}&4.20&4.70&56.14&78.95&86.46&1.19\\
  LoopPlay (2,7)&    0.55 & \underline{1.01} & \underline{2.48}& 3.31& 68.32 & 85.15 & 0.77 &1.56& 3.71 &4.80 &59.65 & 78.95    &86.16&1.10\\
  LoopPlay (2,8)&0.53&1.08&2.50&\underline{3.06}& 68.32 &\textbf{88.45} &0.80&1.47&3.32&\underline{4.36}&57.02&\textbf{81.58}&87.46&1.11\\
  LoopPlay (2,9)&  0.53 & 1.03 &\textbf{2.37}  & 3.24 & \textbf{69.64}  &87.13 &  0.84  & \textbf{1.31} & 3.17  & 4.73 & \textbf{63.16} & \underline{80.70} &86.53 &1.10\\
  LoopPlay (2,10)&\textbf{0.50}&\textbf{0.95}& 2.55 & \textbf{2.99} & \underline{69.31} & 87.13  & \underline{0.70}  & 1.52 &\textbf{3.11} & \textbf{4.25}& \underline{61.40} & \underline{80.70}  &86.97&1.12\\
\bottomrule
\end{tabular}
}
 \end{table}

\begin{figure}[htp]
    \centering
        \begin{subfigure}[b]{0.33\textwidth}
        \centering
        \includegraphics[width=\textwidth]{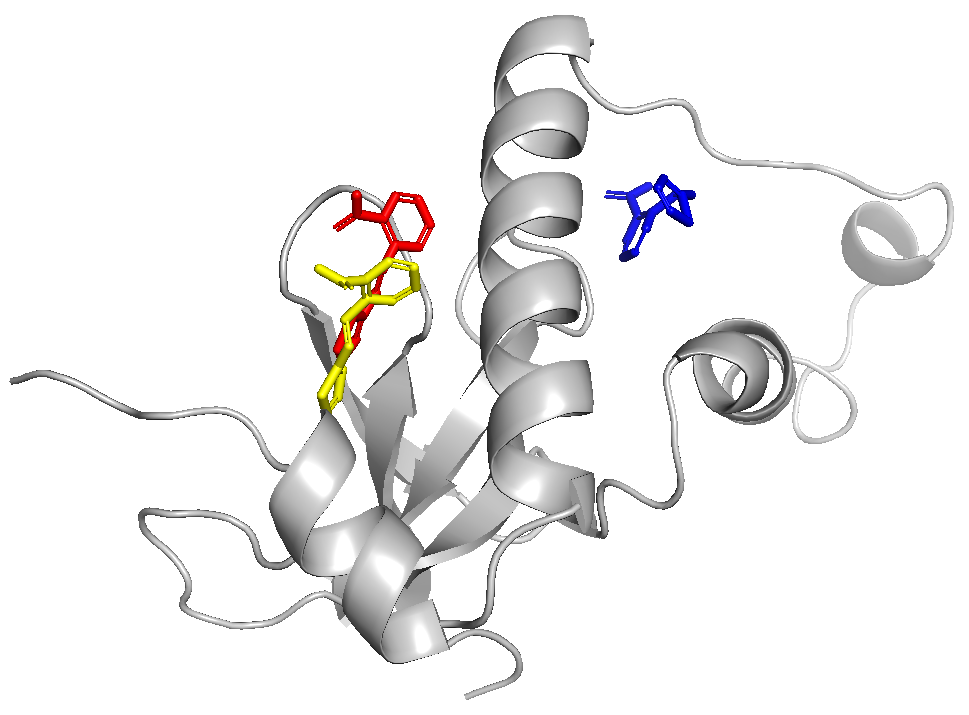}
        \caption{PDB: 6G24 (RMSD:  \textcolor{yellow}{4.34} vs. \textcolor{blue}{16.84})}
        \label{fig:sub1}
    \end{subfigure}       
        \hfill
    \begin{subfigure}[b]{0.33\textwidth}
        \centering
        \includegraphics[width=\textwidth]{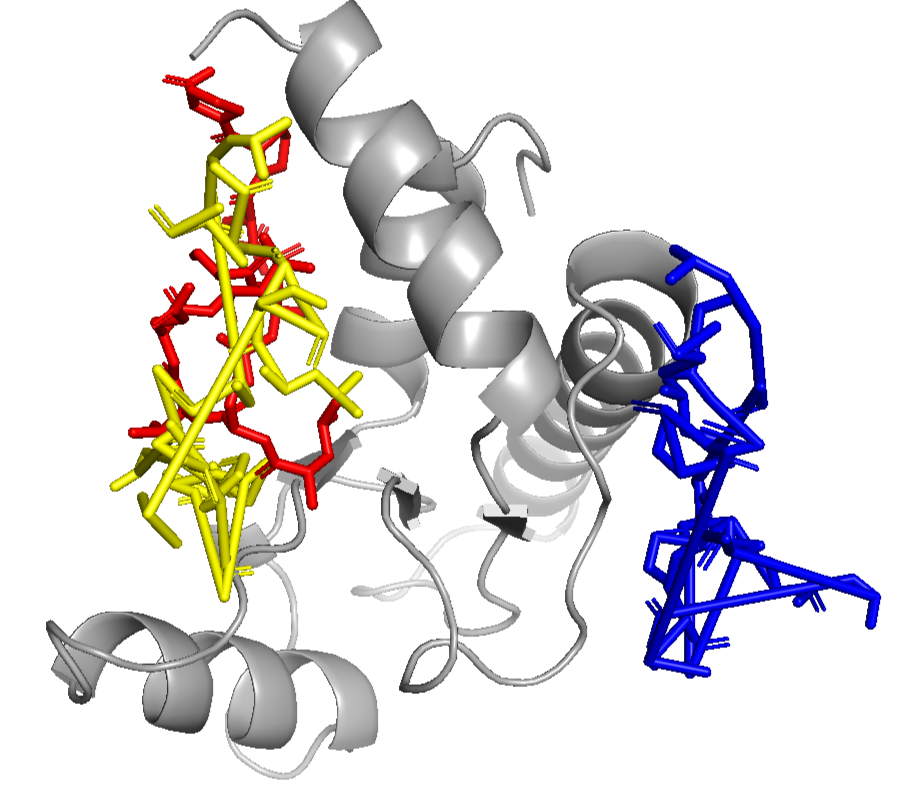}
        \caption{PDB: 6RR0 (RMSD: \textcolor{yellow}{12.42} vs. \textcolor{blue}{26.20})}
        \label{fig:sub1}
    \end{subfigure}
     \hfill
    \begin{subfigure}[b]{0.33\textwidth}
        \centering
        \includegraphics[width=\textwidth]{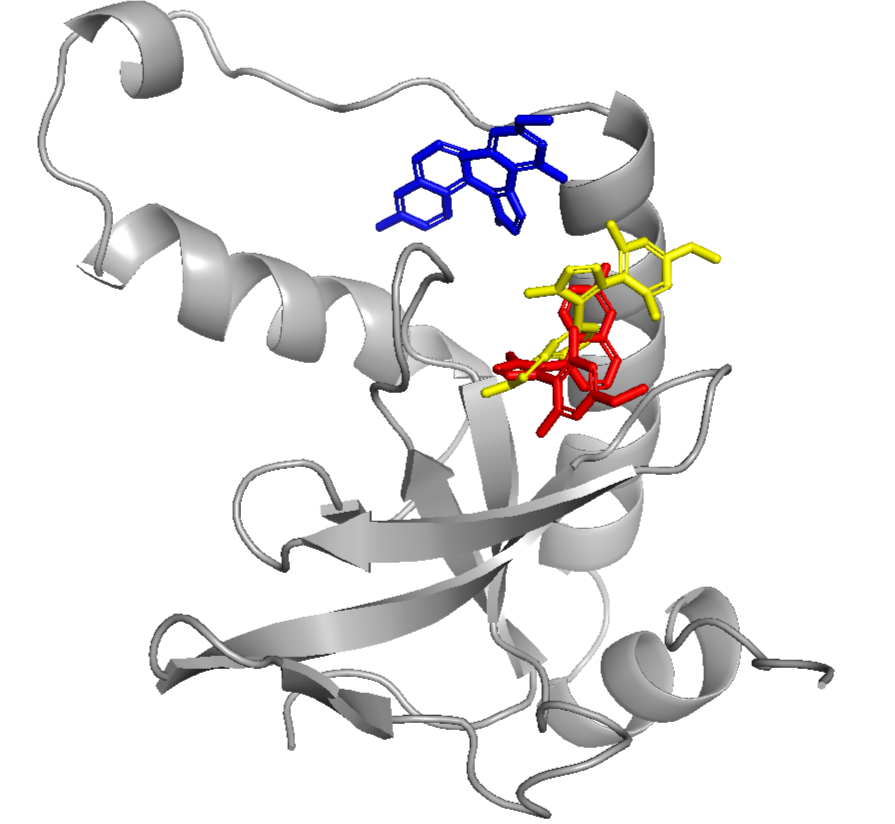}
        \caption{PDB: 6G2O (RMSD:  \textcolor{yellow}{6.52} vs. \textcolor{blue}{11.94})}
        \label{fig:sub1}
    \end{subfigure}
     \vspace{0.5cm} 
    \begin{subfigure}[b]{0.33\textwidth}
        \centering
        \includegraphics[width=\textwidth]{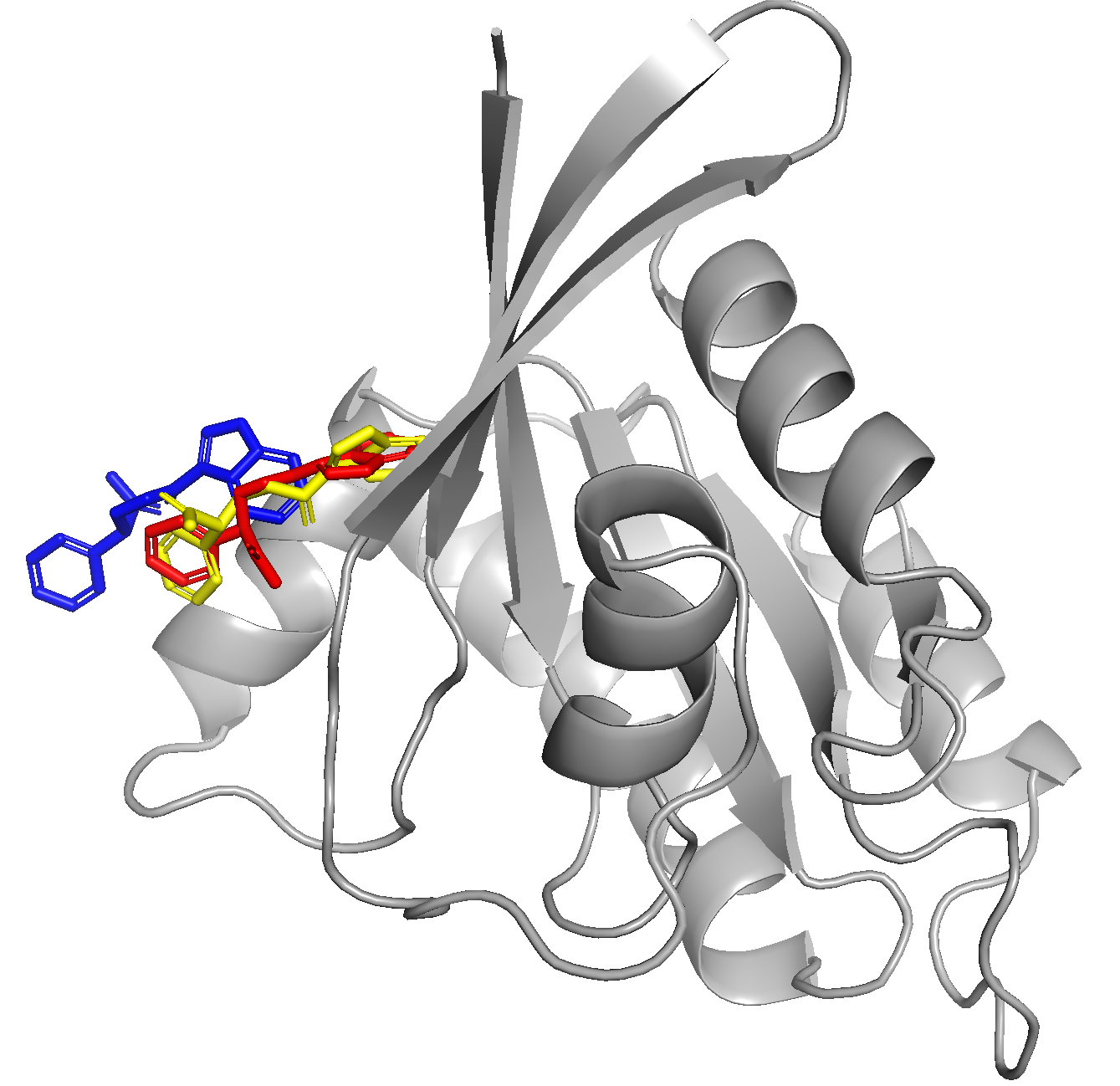}
        \caption{PDB: 6V5L (RMSD: \textcolor{yellow}{1.60} vs.  \textcolor{blue}{5.61})}
        \label{fig:sub2}
    \end{subfigure}
    \hfill
    \begin{subfigure}[b]{0.33\textwidth}
        \centering
        \includegraphics[width=\textwidth]{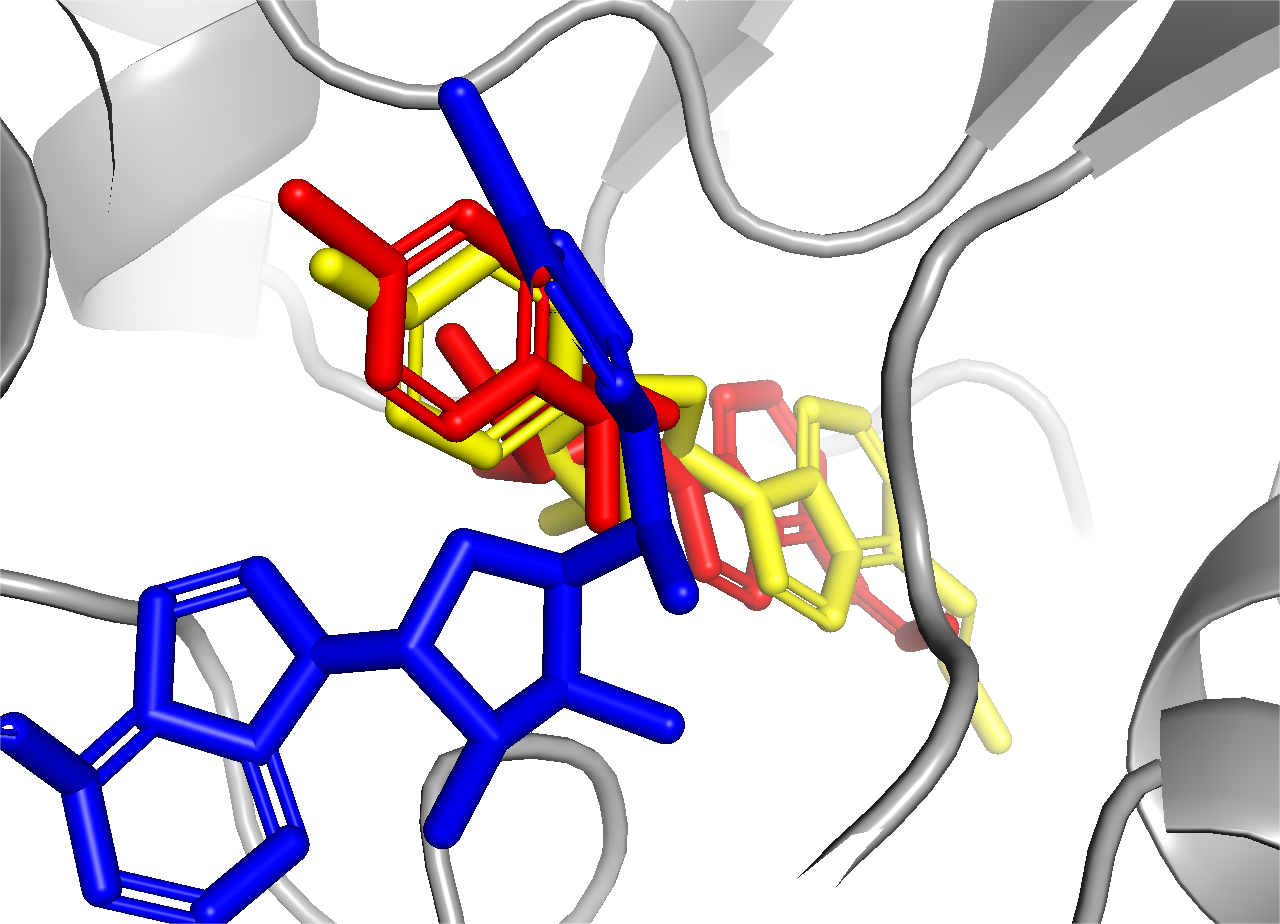}
        \caption{PDB: 6K1S (RMSD:  \textcolor{yellow}{1.46} vs.   \textcolor{blue}{9.97})}
        \label{fig:sub3}
    \end{subfigure}
    \hfill
    \begin{subfigure}[b]{0.33\textwidth}
        \centering
        \includegraphics[width=\textwidth]{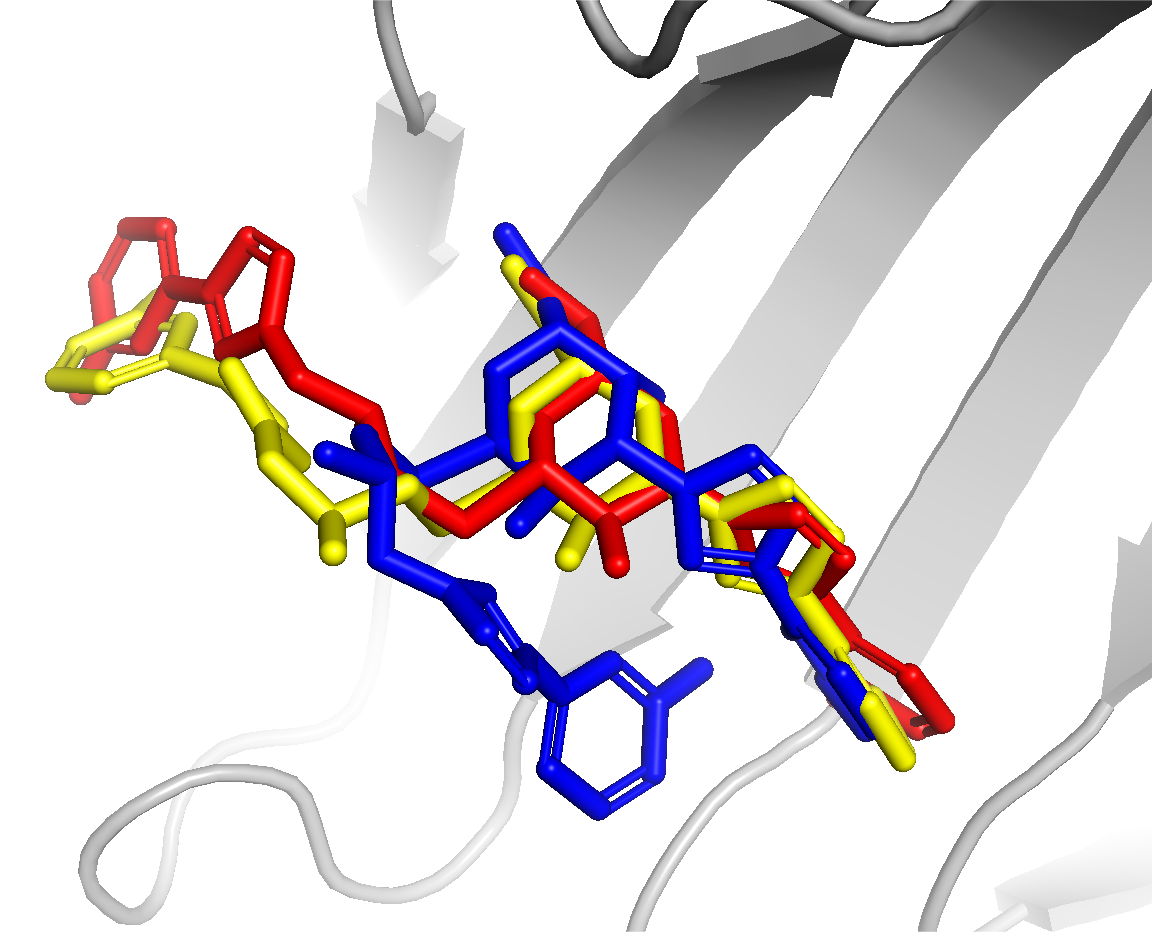}
        \caption{PDB: 6QGE (RMSD: \textcolor{yellow}{1.68} vs.  \textcolor{blue}{6.77})}
        \label{fig:sub4}
    \end{subfigure}
    
    \caption{\textcolor{red}{Red ligand}: ground truth; \textcolor{yellow}{Yellow ligand}: by LoopPlay; \textcolor{blue}{Blue ligand}: by FABFlex.}
    \label{fig:case_study}
\end{figure}
\subsection{Further Analysis}
\subsubsection{Ablation Study.} 

In this study, we focus on the original version of Self-Play, where there are no inner or outer loops; specifically, we have \(N_L = M_L = N_P = M_P = 1\). We also examine the impact of the number of loops on the results. Alongside the ligand root-mean-square deviation (RMSD), we introduce a new metric called Centroid Distance. This metric measures the Euclidean distance between the predicted and actual averaged ligand coordinates, which indicates the model's ability to accurately identify the correct binding site. The formula for calculating the centroid distance between the predicted and true averaged ligand coordinates is provided below:
\[
\text{Centroid Distance} = \sqrt{(x^1_c - \mathbf{\hat{x}}^1_c)^2 + (x^2_c - \mathbf{\hat{x}}^2_c)^2 + (x^3_c - \mathbf{\hat{x}}^3_c)^2}
\]
where:
\begin{itemize}
    \item $(x^1_c, x^2_c, x^3_c)$ are the true centroid coordinates of the ligand, calculated as the average of the atomic coordinates,
    \item $(\mathbf{\hat{x}}^1_c, \mathbf{\hat{x}}^2_c, \mathbf{\hat{x}}^3_c)$ are the predicted centroid coordinates of the ligand, calculated as the average of the predicted atomic coordinates.
\end{itemize}
Table \ref{tab:ablation_performance} presents the results for ligand RMSD, while Table \ref{tab:ablation_centroid_dis} shows the results for centroid distance in the ablation study. It is important to note that LoopPlay (1,1) represents the original version of Self-Play, which performs the worst among the options, in terms of the RMSD and the centroid distance.  It not only underperforms compared to other versions of LoopPlay, but also significantly underperforms the baseline method, FABFlex. As the number of loops increases, LoopPlay shows improved performance to some extent. However, this does not imply that the performance of LoopPlay consistently rises with the number of loops, especially in terms of high-accuracy metrics, which aligns with our theoretical analysis.

 \subsubsection{Efficiency and Pocket Performance}
In the last column of Table \ref{tab:ablation_performance}, it can be seen that the inference time for LoopPlay increases with the number of loops. However, all inference times remain quite fast, each under 1 second. This speed is significantly faster than the sampling methods shown in Table \ref{tab:ligand_performance}. Notably, the speed of LoopPlay (2,3) is comparable to that of FABFlex, yet LoopPlay (2,3) significantly outperforms FABFlex across most metrics.

Table \ref{tab:ablation_centroid_dis} demonstrates that the accuracy of protein pocket prediction is high across nearly all tested algorithms. When both the number of inner and outer loops exceeds 2, the pocket prediction accuracies among different versions of LoopPlay are very similar and also comparable to that of FABFlex. Additionally, the Pocket RMSDs of the various tested algorithms, presented in Table \ref{tab:ablation_centroid_dis}, are also small and closely aligned. These results indicate that our LoopPlay enhances the performance of ligand docking predictions without compromising the efficacy of protein pocket docking predictions.

 \subsection{Case Study}
Figure \ref{fig:case_study} illustrates four cases: PDB 6G24, PDB 6V5L, PDB 6K1S, and PDB 6QGE. These cases demonstrate LoopPlay's effectiveness in blind flexible docking. There are two key observations:

\begin{itemize}
    \item LoopPlay accurately identifies binding pocket sites. For PDB 6G24, 6RR0, and 6G2O, FABFlex incorrectly predicts the binding sites, while LoopPlay correctly locates them, achieving a notably lower ligand RMSD. This result highlights LoopPlay's superior ability to pinpoint binding pockets.
    
    \item LoopPlay excels in ligand structure prediction. For PDB 6V5L, PDB 6K1S, and PDB 6QGE, although all methods correctly identify the binding pocket, LoopPlay's predicted ligand structures are much closer to the ground truth, with ligand RMSDs of 1.60 $\si{\angstrom}$, 1.46$\si{\angstrom}$, and 1.68$\si{\angstrom}$, respectively, all of which are under 2$\si{\angstrom}$. This low RMSD indicates that LoopPlay’s predicted ligands are nearly identical to the true holo ligand, showcasing its exceptional proficiency in ligand structure prediction. In contrast, the predicted ligand structures from FABFlex in these cases deviate significantly from the ground truth, resulting in higher RMSDs, as illustrated in Figure \ref{fig:case_study}.
\end{itemize}

\section{Conclusion}
To enhance the performance of protein-ligand docking, we propose a novel game-theoretic framework that addresses this issue by modeling protein-ligand interactions as a two-player \textit{Docking Game}. In this framework, the ligand docking module acts as the ligand player, while the protein pocket docking module serves as the protein player. 
We introduce a novel algorithm called LoopPlay, designed to train these players through a two-level loop structure. The outer loop facilitates mutual adaptation by allowing the players to exchange predicted poses, while the inner loop refines each player's predictions based on its own outputs. 
We provide a theoretical proof of LoopPlay’s convergence, ensuring stable optimization. Extensive experiments conducted on public benchmark datasets demonstrate approximately a 10\% improvement in accurately predicting binding modes compared to previous state-of-the-art methods. These results highlight LoopPlay’s potential to significantly enhance the precision and reliability of molecular docking, thereby advancing its application in drug discovery.

 \bibliography{aidd}
\bibliographystyle{plain}

\end{document}